\documentclass{article} 
\usepackage{iclr2022_conference,times}


\usepackage{amsmath,amsfonts,bm}









\def\eqref#1{equation~\ref{#1}}









\def\1{\bm{1}}










\DeclareMathAlphabet{\mathsfit}{\encodingdefault}{\sfdefault}{m}{sl}
\SetMathAlphabet{\mathsfit}{bold}{\encodingdefault}{\sfdefault}{bx}{n}














\usepackage[colorlinks=true,linkcolor=blue,citecolor=green,urlcolor=blue]{hyperref}

\newcommand{\mytitle}
{Transfer RL across Observation Feature Spaces via Model-Based Regularization}
\usepackage{hyperref}
\usepackage{url}
\usepackage{natbib}
\usepackage{graphicx}
\usepackage{amsfonts}
\usepackage{amsmath}
\usepackage{amsthm}
\usepackage{bbm}
\usepackage{xcolor}
\usepackage{algorithm}
\usepackage{scalerel}
\usepackage[noend]{algpseudocode}
\setlength\parindent{0pt}

\usepackage{natbib}
\usepackage[outdir=./]{epstopdf}
\usepackage{graphicx,float,pgfplots,wrapfig,sidecap,lipsum}

\usepackage{colortbl}
\usepackage{tablefootnote}
\usepackage[font=small,labelfont=bf]{caption}
\usepackage{subcaption}
\usepackage{subfloat}
\usepackage{tikz}
\usetikzlibrary{fit}
\usetikzlibrary{calc,shapes}
\usetikzlibrary{decorations.pathmorphing} 
\usetikzlibrary{fit}					
\usetikzlibrary{backgrounds}	
\usetikzlibrary{pgfplots.groupplots}

\usepackage[utf8]{inputenc}
\usepackage{pgfplots}
\DeclareUnicodeCharacter{2212}{−}
\usepgfplotslibrary{groupplots,dateplot}
\usetikzlibrary{patterns,shapes.arrows}
\pgfplotsset{compat=newest}

\newtheorem{theorem}{Theorem}
\newtheorem{lemma}[theorem]{Lemma}
\newtheorem{assumption}[theorem]{Assumption}
\newtheorem{definition}[theorem]{Definition}

\newtheorem{proposition}[theorem]{Proposition}

\newcommand{\source}{^{\scaleto{(S)}{5pt}}}
\newcommand{\target}{^{\scaleto{(T)}{5pt}}}
\newcommand{\rep}{\Phi}

\newcommand{\mdp}{M}
\newcommand{\states}{\mathcal{O}}
\newcommand{\actions}{\mathcal{A}}
\newcommand{\bellman}{\mathcal{T}}
\newcommand{\apx}{\mathcal{H}_{\phi}}
\newcommand{\apf}{h}
\newcommand{\reppolicy}{\Pi_{\phi}^D}
\newcommand{\lipsvalue}{K_{\phi,V}}
\newcommand{\lipsapx}{K_{\phi,h}}

\newcommand{\qpi}{Q_\pi}
\newcommand{\vpi}{V_\pi}
\newcommand{\ppi}{P_\pi}
\newcommand{\rpi}{R_\pi}
\newcommand{\pact}{P_a}
\newcommand{\ract}{R_a}
\newcommand{\appi}{\hat{P}_\pi}
\newcommand{\arpi}{\hat{R}_\pi}
\newcommand{\apa}{\hat{P}_a}
\newcommand{\ara}{\hat{R}_a}
\newcommand{\avpi}{\hat{V}_\pi}

\definecolor{sourcecolor}{rgb}{0.5,1,0.5}
\definecolor{ourcolor}{rgb}{1,0,0}
\definecolor{singlecolor}{rgb}{0,0,1}
\definecolor{auxcolor}{rgb}{0.54,0.17,0.89}
\definecolor{linearcolor}{rgb}{0.172549019607843,0.627450980392157,0.172549019607843}
\definecolor{randomcolor}{rgb}{1,0.498039215686275,0.0549019607843137}
\definecolor{tunecolor}{rgb}{0.9568627450980393, 0.8156862745098039,0}
\definecolor{aligncolor}{rgb}{0,0.5,0}
\title{\mytitle}


\author{
Yanchao Sun\textsuperscript{\dag}\thanks{The work was done while the author was an intern at Unity Technologies. } 
\quad
Ruijie Zheng\textsuperscript{\dag} 
\quad
Xiyao Wang\textsuperscript{\dag} 
\quad
Andrew Cohen\textsuperscript{\ddag}
\quad
Furong Huang\textsuperscript{\dag}  \\
\textsuperscript{\dag} \text{University of Maryland, College Park} \quad
  \textsuperscript{\ddag} Unity Technologies \\
  \textsuperscript{\dag}\texttt{\{ycs,rzheng12,xywang,furongh\}@umd.edu}
  \quad
  \textsuperscript{\ddag}\texttt{andrew.cohen@unity3d.com}
}


%

\iclrfinalcopy 
\begin{document}
\maketitle

\begin{abstract} 
In many reinforcement learning (RL) applications, the observation space is specified by human developers and restricted by physical realizations, and may thus be subject to dramatic changes over time (e.g. increased number of observable features). However, when the observation space changes, the previous policy will likely fail due to the mismatch of input features, and another policy must be trained from scratch, which is inefficient in terms of computation and sample complexity. Following theoretical insights, we propose a novel algorithm which extracts the latent-space dynamics in the source task, and transfers the dynamics model to the target task to use as a model-based regularizer. Our algorithm works for drastic changes of observation space (e.g. from vector-based observation to image-based observation), without any inter-task mapping or any prior knowledge of the target task. Empirical results show that our algorithm significantly improves the efficiency and stability of learning in the target task.
\end{abstract}
\section{Introduction}
\label{sec:intro}

Deep Reinforcement Learning (DRL) has the potential to be used in many large-scale applications such as robotics, gaming and automotive.
In these real-life scenarios, it is an essential ability for agents to utilize the knowledge learned in past tasks to facilitate learning in unseen tasks, which is known as Transfer RL (TRL).
Most existing TRL works~\citep{taylor2009transfer,zhu2020transfer} focus on tasks with the same state-action space but different dynamics/reward.
However, these approaches do not apply to the case where the observation space changes significantly. 
Observation change is common in practice as in the following scenarios.
(1) Incremental environment development. RL is used to train non-player characters (NPC) in games~\citep{juliani2018unity}, which may be frequently updated. 
When there are new scenes, characters, or obstacles added to the game, the agent's observation space will change accordingly.
(2) Hardware upgrade/replacement. For robots with sensory observations~\citep{bohez2017sensor}, the observation space could change (e.g. from text to audio, from lidar to camera) as the sensor changes.
(3) Restricted data access. In some RL applications~\citep{ganesh2019reinforcement}, agent observation contains sensitive data (e.g. inventory) which may become unavailable in the future due to data restrictions. 
In these cases, the learner may have to discard the old policy and train a new policy from scratch, as the policy has a significantly different input space, even though the underlying dynamics are similar. But training an RL policy from scratch can be expensive and unstable. Therefore, there is a crucial need for a technique that transfers knowledge across tasks with similar dynamics but different observation spaces.

Besides these existing common applications, there are more benefits of across-observation transfer. For example, observations in real-world environments are usually rich and redundant, so that directly learning a policy is hard and expensive. If we can transfer knowledge from low-dimensional and informative vector observations (usually available in a simulator) to richer observations, the learning efficiency can be significantly improved. Therefore, an effective transfer learning method enables many novel and interesting applications, such as curriculum learning via observation design.

\begin{figure}[!htbp]
\centering
    \includegraphics[width=0.9\textwidth]{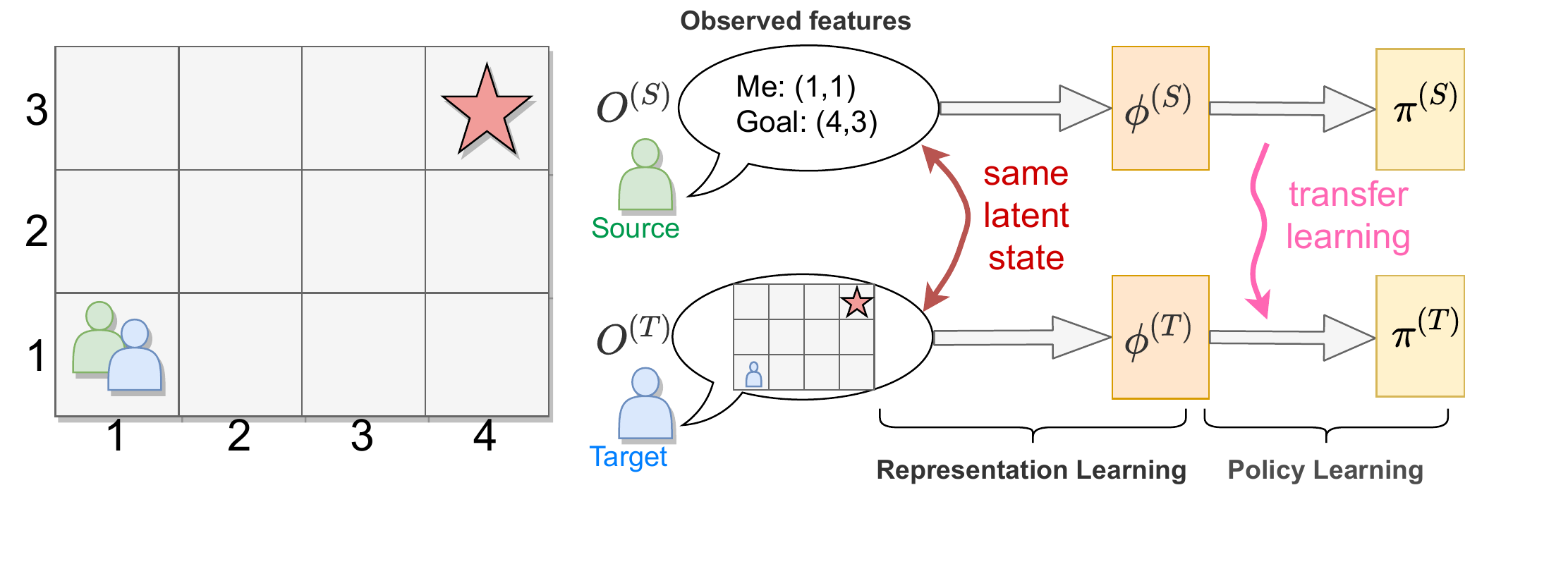}
\vspace{-2em}
\caption{\small{An example of the transfer problem with changed observation space.
The source-task agent observes the x-y coordinates of itself and the goal, while the target-task agent observes a top-down view/image of the whole maze. The two observation spaces are drastically different, but the two tasks are structurally similar. Our goal is to transfer knowledge from the source task to accelerate learning in the target task, without knowing or learning any inter-task mapping.
}}
\label{fig:example}
\vspace{-1em}
\end{figure}




In this paper, we aim to fill the gap and propose a new algorithm that can automatically transfer knowledge from the old environment to facilitate learning in a new environment with a (drastically) different observation space.
In order to meet more practical needs, we focus on the challenging setting where the observation change is: 
\textbf{(1) unpredictable} (there is no prior knowledge about how the observations change), 
\textbf{(2) drastic} (the source and target tasks have significantly different observation feature spaces, e.g., vector to image), and
\textbf{(3) irretrievable} (once the change happens, it is impossible to query the source task, so that the agent can not interact with both environments simultaneously).
Note that different from many prior works ~\citep{taylor2007transfer,mann2013directed}, we do not assume the knowledge of any inter-task mapping. That is, the agent does not know which new observation feature is corresponding the which old observation feature.
To remedy the above challenges and achieve knowledge transfer, we make a key observation that, if only the observation features change, the source and target tasks share the same latent space and dynamics
(e.g. in Figure~\ref{fig:example}, $\states\source$ and $\states\target$ can be associated to the same latent state).
Therefore, we first disentangle representation learning from policy learning, and then accelerate the target-task agent by regularizing the representation learning process with the latent dynamics model learned in the source task. We show by theoretical analysis and empirical evaluation that the target task can be learned more efficiently with our proposed transfer learning method than from scratch.


\textbf{Summary of Contributions.} 
(1) To the best of our knowledge, we are the first to discuss the transfer problem where the source and target tasks have drastically different observation feature spaces, and there is no prior knowledge of an inter-task mapping.
(2) We theoretically characterize what constitutes a ``good representation'' and analyze the sufficient conditions the representation should satisfy.
(3) Theoretical analysis shows that a model-based regularizer enables efficient representation learning in the target task. Based on this, we propose a novel algorithm that automatically transfers knowledge across observation representations.
(4) Experiments in 7 environments show that our proposed algorithm significantly improves the learning performance of RL agents in the target task.

\vspace{-1em}
\section{Preliminaries and Background}
\label{sec:prelim}

\textbf{Basic RL Notations.}
An RL task can be modeled by a Markov Decision Process (MDP)~\citep{puterman2014markov}, defined as a tuple $\mdp=\langle \states,\actions,P,R,\gamma \rangle$, where $\states$ is the state/observation space, $\actions$ is the action space, $P$ is the transition kernel, $R$ is the reward function and $\gamma$ is the discount factor. 
At timestep $t$, the agent observes state $o_t$, takes action $a_t$ based on its \textit{policy} $\pi: \states \to \Delta(\actions)$(where $\Delta(\cdot)$ denotes the space of probability distributions), and receives reward $r_t=R(o_t,a_t)$. The environment then proceeds to the next state $o_{t+1} \sim P(\cdot|o_t, a_t)$.
The goal of an RL agent is to find a policy $\pi$ in the policy space $\Pi$ with the highest cumulative reward, which is characterized by the value functions.
The \textit{value} of a policy $\pi$ for a state $o\in\states$ is defined as $V^\pi(o) = \mathbb{E}_{\pi,P}[\sum_{t=0}^\infty \gamma^t r_t|o_{0}=o]$. The \textit{Q value} of a policy $\pi$ for a state-action pair $(o,a) \in \states \times \actions$ is defined as $Q^\pi(o,a) = \mathbb{E}_{\pi,P}[\sum_{t=0}^\infty \gamma^t r_t|o_0=o, a_0=a]$. Appendix~\ref{app:prelim} provides more background of RL.


\textbf{Representation Learning in RL.}
Real-world applications usually have large observation spaces for which function approximation is needed to learn the value or the policy. 
However, directly learning a policy over the entire observation space could be difficult, as there is usually redundant information in the observation inputs. A common solution is to map the large-scale observation into a smaller representation space via a \textit{non-linear encoder} (also called a \textit{representation mapping}) $\phi: \states \to \mathbb{R}^d$, where $d$ is the representation dimension, 
and then learn the policy/value function over the representation space $\phi(\states)$. In DRL, the encoder and the policy/value are usually jointly learned.
\vspace{-1.5em}
\section{Problem Setup: Transfer Across Different Observation Spaces}
\label{sec:setup}
\vspace{-0.5em}

We aim to transfer knowledge learned from a source MDP to a target MDP, whose observation spaces are different while dynamics are structurally similar.
Denote the source MDP as $\mathcal{M}\source = \langle \states\source, \actions, P\source, R\source, \gamma \rangle$, and the target MDP as $\mathcal{M}\target = \langle \states\target, \actions, P\target, R\target, \gamma \rangle.$ 
Note that $\states\source$ and $\states\target$ can be significantly different, such as $\states\source$ being a low-dimensional vector space and $\states\target$ being a high-dimensional pixel space, which is challenging for policy transfer since the source target policy have different input shapes and would typically be very different architecturally. 

In this work, as motivated in the Introduction, we focus on the setting wherein the dynamics (($P\source, R\source$) and ($P\target, R\target$)) of the two MDPs between which we transfer knowledge are defined on different observation spaces but share structural similarities. Specifically, we make the assumption that there exists a mapping between the source and target observation spaces such that the transition dynamics under the mapping in the target task share the same transition dynamics as in the source task. We formalize this in Assumption~\ref{assum:similarity}:

\begin{assumption}
\label{assum:similarity}
There exists a function $f: \states\target \to \states\source$ such that $\forall o_i\target, o_j\target \in \states\target, \forall a\in\actions$,
\begin{equation*}
    P\target(o_j\target|o_i\target,a) = P\source(f(o_j\target) | f(o_i\target), a), \quad 
    R\target(o_i\target,a) = R\source(f(o_i\target),a).
\end{equation*}
\end{assumption}
\textbf{Remarks.} 
(1) Assumption~\ref{assum:similarity} is mild as many real-world scenarios fall under this assumption. For instance, when upgrading the cameras of a patrol robot to have higher resolutions, such a mapping $f$ can be a down-sampling function.
(2) $f$ is a general function without extra restrictions. $f$ can be a many-to-one mapping, i.e., more than one target observations can be related to the same observation in the source task. $f$ can be non-surjective, i.e., there could exist source observations that do not correspond to any target observation.


Many prior works~\citep{mann2013directed,brys2015policy} have similar assumptions, but require prior knowledge of such an inter-task mapping to achieve knowledge transfer. However, such a mapping might not be available in practice. 
As an alternative, we propose a novel transfer algorithm in the next section that \textit{does not} assume any prior knowledge of the mapping $f$. The proposed algorithm learns a latent representation of the observations and a dynamics model in this latent space, and then the dynamics model is transferred to speed up learning in the target task. 


\section{Methodology: Transfer with Regularized Representation}
\label{sec:method}

In this section, we first formally characterize \textit{``what a good representation is for RL''} in Section~\ref{sec:representation}, then introduce our proposed transfer algorithm based on representation regularization  in Section~\ref{sec:algo}, and next provide theoretical analysis of the algorithm in Section~\ref{sec:theory}.

\subsection{Characterizing Conditions for Good Representations}
\label{sec:representation} 

As discussed in Section~\ref{sec:prelim}, real-world applications usually have rich and redundant observations, where learning a good representation~\citep{jaderberg2016reinforcement,dabney2020the} is essential for efficiently finding an optimal policy.
However, the properties that constitute a good representation for an RL task are still an open question.
Some prior works~\citep{bellemare2019geometric,dabney2020the,gelada2019deepmdp} have discussed the representation quality in DRL, but we take a different perspective and focus on characterizing the sufficient properties of representation for learning a task.

Given a representation mapping $\phi$, the Q value of any $(o,a)\in\states\times\actions$ can be approximately represented by a function of $\phi(o)$, i.e., $\hat{Q}(o, a) = h(\phi(o); \theta_a)$, where $h$ is a function parameterized by $\theta_a$.
To study the relation between representation quality and approximation quality,
we define an \textit{approximation operator} $\apx$, which finds the best Q-value approximation based on $\phi$. Formally, let $\Theta$ denote the parameter space of function $h\in\mathcal{H}$, then $\forall a\in\actions$, $\apx Q(o,a) := h(\phi(o);\theta^*_a)$, where $\theta^*_a=\mathrm{argmin}_{\theta\in\Theta} \mathbb{E}_o [\| h(\phi(o);\theta) - Q(\phi(o),a) \| ]$. Such a function $h$ can be realized by neural networks as universal function approximators~\citep{hornik1989multilayer}.
Therefore, the value approximation error $\|Q-\apx Q\|$ only depends on the representation quality, i.e., whether we can represent the Q value of any state $o$ as a function of the encoded state $\phi(o)$.

The quality of the encoder $\phi$ is crucial for learning an accurate value function or learning a good policy. The ideal encoder $\phi$ should discard irrelevant information in the raw observation but keep essential information. 
In supervised or self-supervised representation learning~\citep{chen2020simple,achille2018emergence}, it is believed that a good representation $\phi(X)$ of input $X$ should contain minimal information of $X$ which maintaining sufficient information for predicting the label $Y$.
However, in RL, it is difficult to identify whether a representation is sufficient, since there is no label corresponding to each input. 
The focus of an agent is to estimate the value of each input $o\in\states$, which is associated with some policy. Therefore, we point out that the representation quality in RL is \emph{policy-dependent}. 
Below, we formally characterize the sufficiency of a representation mapping in terms of a fixed policy and learning a task. 

\textbf{Sufficiency for A Fixed Policy.} If the agent is executing a fixed policy, and its goal is to estimate the expected future return from the environment, then a representation is sufficient for the policy as long as it can encode the policy value $\vpi$. A formal definition is provided by Definition~\ref{def:suf_policy} in Appendix~\ref{app:repre}.


\textbf{Sufficiency for Learning A Task.} The goal of RL is to find an optimal policy. Therefore, it is not adequate for the representation to only fit one policy.
Intuitively, a representation mapping is sufficient for learning if we are able to find an optimal policy over the representation space $\phi(\states)$, which requires multiple iterations of policy evaluation and policy improvement.
Definition~\ref{def:reppolicy} below defines a set of ``important'' policies for learning with $\phi(\states)$.
\begin{definition}[Encoded Deterministic Policies]
\label{def:reppolicy}
For a given representation mapping $\phi(\cdot)$, define an encoded deterministic policy set $\reppolicy$ as the set of policies that are deterministic and take the same actions for observations with the same representations. Formally,
\begin{equation}
    \reppolicy := \{ \pi \in \Pi \quad | \quad \exists \tilde{\pi}: \phi(\states) \to \actions \text{ s.t. } \forall o\in\states, \pi(o) = \tilde{\pi}(\phi(o))  \},
\end{equation}
where $\tilde{\pi}$ is a mapping from the representation space to the action space.
\end{definition}
A policy $\pi$ is in $\reppolicy$ if it does not distinguish $o_1$ and $o_2$ when $\phi(o_1)=\phi(o_2)$. Therefore, $\reppolicy$ can be regarded as deterministic policies that make decisions for encoded observations. 
Now, we define the concept of sufficient representation for learning in an MDP.
\begin{definition}[Sufficient Representation for Learning]
\label{def:suf_learn}
A representation mapping $\phi$ is \textbf{sufficient} for a task $\mdp$ w.r.t. approximation operator $\apx$ if $\apx \qpi = \qpi$ for all $\pi\in\reppolicy$. Furthermore, \\
$\bullet$ $\phi$ is \textbf{linearly-sufficient} for learning $\mdp$ if $\exists\theta_a$ s.t. $\qpi(o,a)=\phi(o)^\top\theta_a$, $\forall a\in\actions, \pi\in\reppolicy$. \\
$\bullet$ $\phi$ is $\boldsymbol{\epsilon}$\textbf{-sufficient} for learning $\mdp$ if $\|\apx\qpi - \qpi\| \leq \epsilon$, $\forall \pi\in\reppolicy$.
\end{definition}

Definition~\ref{def:suf_learn} suggests that the representation is sufficient for learning a task as long as it is sufficient for policies in $\reppolicy$.
Then, the lemma below justifies that a nearly sufficient representation can ensure that approximate policy iteration converges to a near-optimal solution. (See Appendix~\ref{app:avi} for analysis on approximate value iteration.)

\begin{lemma}[Error Bound for Approximate Policy Iteration]
\label{lem:bound_pi}
If $\phi$ is $\epsilon$-sufficient for task $\mdp$ (with $\ell_\infty$ norm), then the approximated policy iteration with approximation operator $\apx$ starting from any initial policy that is encoded by $\phi$ ($\pi_0\in\reppolicy$) satisfies
\setlength\abovedisplayskip{-2pt}
\setlength\belowdisplayskip{2pt}
\begin{equation}
    \limsup_{k\to\infty} \|Q^* - Q^{\pi_k} \|_{\infty} \leq \frac{2\gamma^2\epsilon}{(1-\gamma)^2},
\end{equation}
where $\pi_k$ is the policy in the $k$-th iteration. 
\end{lemma}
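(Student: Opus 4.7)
My plan is to follow the classical approximate policy iteration (API) analysis of Bertsekas and Tsitsiklis, with the one extra ingredient that the policy class $\reppolicy$ is invariant under the API update induced by $\apx$. This invariance is what lets me apply the $\epsilon$-sufficiency hypothesis at every iteration, not just the first.

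\paragraph{Step 1: Invariance of $\reppolicy$ under the API update.} API alternates (i) approximate policy evaluation $\hat Q_k := \apx \qpi[\pi_k]$ and (ii) greedy improvement $\pi_{k+1}(o)\in\argmax_a \hat Q_k(o,a)$. First I would observe that by construction of $\apx$, $\hat Q_k(o,a)=h(\phi(o);\theta^\ast_a)$ depends on $o$ only through $\phi(o)$. Hence if $\phi(o_1)=\phi(o_2)$ then $\hat Q_k(o_1,\cdot)=\hat Q_k(o_2,\cdot)$, and any greedy selection rule produces a $\pi_{k+1}\in\reppolicy$. Together with $\pi_0\in\reppolicy$, induction gives $\pi_k\in\reppolicy$ for all $k$, so the $\epsilon$-sufficiency hypothesis applies throughout:
\begin{equation*}
    \|\hat Q_k - \qpi[\pi_k]\|_\infty \;\le\; \epsilon \qquad \text{for every } k\ge 0.
\end{equation*}

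\paragraph{Step 2: One-step error propagation.} Next I would bound $\|Q^\ast - Q^{\pi_{k+1}}\|_\infty$ in terms of $\|Q^\ast - Q^{\pi_k}\|_\infty$ and $\epsilon$. Writing $T^\ast$ for the Bellman optimality operator and $T^{\pi}$ for the policy-specific Bellman operator, standard identities give
\begin{equation*}
    Q^\ast - Q^{\pi_{k+1}} \;=\; (T^\ast Q^\ast - T^{\pi_{k+1}} Q^\ast) + T^{\pi_{k+1}}(Q^\ast - Q^{\pi_{k+1}}),
\end{equation*}
so by $\gamma$-contractivity of $T^{\pi_{k+1}}$,
\begin{equation*}
    (1-\gamma)\,\|Q^\ast - Q^{\pi_{k+1}}\|_\infty \;\le\; \|T^\ast Q^\ast - T^{\pi_{k+1}} Q^\ast\|_\infty .
\end{equation*}
The right-hand side is the suboptimality of acting greedily w.r.t.\ $\hat Q_k$ instead of $Q^\ast$. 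Using that $\pi_{k+1}$ is $\hat Q_k$-greedy while $\pi^\ast$ is $Q^\ast$-greedy, and that $\hat Q_k$ differs from $\qpi[\pi_k]$ by at most $\epsilon$, I would bound this term by $2\gamma\,\epsilon + \gamma\,\|Q^\ast - Q^{\pi_k}\|_\infty$ times an appropriate factor, obtaining the classical recursion
\begin{equation*}
    \|Q^\ast - Q^{\pi_{k+1}}\|_\infty \;\le\; \gamma\,\|Q^\ast - Q^{\pi_k}\|_\infty + \tfrac{2\gamma\epsilon}{1-\gamma}.
\end{equation*}

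\paragraph{Step 3: Take $\limsup$ and finish.} Since the recursion is a contraction in $\|Q^\ast - Q^{\pi_k}\|_\infty$ with ratio $\gamma<1$, the fixed point of the inequality is $\frac{2\gamma\epsilon}{(1-\gamma)^2}$, and the extra $\gamma$ in the stated bound comes from converting from a one-step Bellman-residual bound to the actual $Q$-value gap (an additional $\gamma$ factor appears because the greedy-action mismatch only affects the next-step expected value, not the immediate reward). Taking $\limsup_{k\to\infty}$ yields the claimed $\tfrac{2\gamma^2\epsilon}{(1-\gamma)^2}$.

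\paragraph{Main obstacle.} The routine part is the Bellman-style manipulation in Step 2. The conceptual point that really matters, and that I expect to be the crux, is Step 1: without verifying that $\reppolicy$ is closed under the greedy step of $\apx$-based API, the $\epsilon$-sufficiency hypothesis (which is quantified only over $\pi\in\reppolicy$) would not propagate past the first iteration and the whole argument would collapse. The closure follows cleanly from the fact that $\apx$ outputs functions of the form $h(\phi(o);\theta^\ast_a)$, which is why the definition of $\reppolicy$ was tailored precisely to this class.
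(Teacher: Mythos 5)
Your overall route matches the paper's: the crux you identify in Step~1 --- that $\apx$ outputs functions of $\phi(o)$ only, so the greedy step keeps every iterate inside $\reppolicy$ and the $\epsilon$-sufficiency hypothesis propagates --- is exactly the induction the paper performs, and the rest is the classical Bertsekas--Tsitsiklis error-propagation analysis, which the paper simply cites as a black box rather than re-deriving. The one place your sketch is loose is the bookkeeping around the final factor of $\gamma$. The paper's cleaner path is: (i) convert the $Q$-evaluation error into a $V$-evaluation error ($\|V_k-V^{\pi_k}\|_\infty\le\epsilon$ by averaging over $\pi_k$'s actions), (ii) apply the B--T bound to get $\limsup_k\|V^*-V^{\pi_k}\|_\infty\le 2\gamma\epsilon/(1-\gamma)^2$, and (iii) use $\|Q^*-Q^{\pi_k}\|_\infty\le\gamma\|V^*-V^{\pi_k}\|_\infty$ (the immediate rewards cancel in the Bellman expansion) to pick up the extra $\gamma$. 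Your Step~2, by contrast, writes the recursion directly for $Q$-gaps; note that the decomposition $(1-\gamma)\|Q^*-Q^{\pi_{k+1}}\|_\infty\le\|T^*Q^*-T^{\pi_{k+1}}Q^*\|_\infty$ combined with the naive greedy-mismatch bound gives a coefficient $2\gamma/(1-\gamma)$ on $\|Q^*-Q^{\pi_k}\|_\infty$, which is not a contraction for $\gamma\ge 1/3$; obtaining the genuine recursion with ratio $\gamma$ requires the monotonicity-based argument in B--T, not just contractivity. Likewise your Step~3 explanation of where the extra $\gamma$ enters is verbal rather than an identity; if you adopt the paper's $V$-to-$Q$ conversion in step (iii) above, the $\gamma^2$ falls out mechanically and the hand-waving disappears.
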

Lemma~\ref{lem:bound_pi}, proved in Appendix~\ref{app:proofs}, is extended from the error bound provided by~\citet{BertsekasTsitsiklis96}. For simplicity, we consider the bound in $\ell_\infty$, but tighter bounds can be derived with other norms~\citep{munos2005error}, although a tighter bound is not the focus of this paper.

\textbf{How Can We Learn A Sufficient Representation?}
So far we have provided a principle to define whether a given representation is sufficient for learning. In DRL, the representation is learned together with the policy or value function using neural networks, but the quality of the representation may be poor~\citep{dabney2020the}, which makes it hard for the agent to find an optimal policy.
Based on Definition~\ref{def:suf_learn}, a natural method to learn a good representation is to let the representation fit as many policy values as possible as auxiliary tasks, which matches the ideas in other works. For example, \citet{bellemare2019geometric} propose to fit a set of representative policies (called \textit{adversarial value functions}). 
\citet{dabney2020the} choose to fit the values of all past policies (along the \textit{value improvement path}), which requires less computational resource.
Different from these works that directly fit the value functions of multiple policies, in Section~\ref{sec:algo}, we propose to \emph{fit and transfer an auxiliary policy-independent dynamics model}, which is an efficient way to achieve sufficient representation for learning and knowledge transfer, as theoretically justified in Section~\ref{sec:theory}.

\begin{figure}[t]
\vspace{-2em}
\begin{algorithm}[H]
\caption{Source Task Learning}
\label{alg:source}
\begin{algorithmic}[1] 
    \Require Regularization weight $\lambda$; update frequency $m$ for stable encoder.
    \State Initialize encoder $\phi\source$, stable encoder $\hat{\phi}\source$, policy $\pi\source$, transition prediction network $\hat{P}$ and reward prediction network $\hat{R}$.
    \For{$t=0,1,\cdots$}  
        \State Take action $a_t \sim \pi\source(\phi\source(o\source_t))$, get next observation $o_{t+1}\source$ and reward $r_t$, store to buffer.
        \State Sample a mini-batch $\{o_i,a_i,r_i,o^\prime_i\}_{i=1}^N$ from the buffer.
        \State Update $\hat{P}$ and $\hat{R}$ using one-step gradient descent with $\nabla_{\hat{P}} L_P(\hat{\phi}\source;\hat{P})$ and $\nabla_{\hat{R}} L_R(\hat{\phi}\source;\hat{R})$, where $L_P$ and $L_R$ are defined in Equation~(\ref{eq:model_loss}).
        \State Update encoder and policy by $\min_{\pi\source,\phi\source} L_{\text{base}}(\phi\source, \pi\source) + \lambda \big(L_P(\phi\source;\hat{P}) + L_R(\phi\source;\hat{R})\big).$
        \If{$t \mid m$}  Update the stable encoder $\hat{\phi}\source \leftarrow \phi\source.$
        \EndIf
    \EndFor 
\end{algorithmic}
\end{algorithm}
\vspace{-1.5em}
\end{figure}
\begin{figure}[t]
\vspace{-2em}
\begin{algorithm}[H]
\caption{Target Task Learning with Transferred Dynamics Models}
\label{alg:target}
\begin{algorithmic}[1] 
    \Require Regularization weight $\lambda$; dynamics models $\hat{P}$ and $\hat{R}$ learned in the source task.
    \State Initialize encoder $\phi\target$, policy $\pi\target$
    \For{$t=0,1,\cdots$}  
        \State Take action $a_t \sim \pi\target(\phi\target(o_t\target))$, get next observation $o_{t+1}\target$ and reward $r_t$, store to buffer.
        \State Sample a mini-batch $\{o_i,a_i,r_i,o^\prime_i\}_{i=1}^N$ from the buffer.
        \State Update encoder and policy by $\min_{\phi\target, \pi\target} L_{\text{base}}(\phi\target,\pi\target) + \lambda \big(L_P(\phi\target;\hat{P}) + L_R(\phi\target;\hat{R}))$, where $L_P$ and $L_R$ are defined in Equation~(\ref{eq:model_loss}).
    \EndFor 
\end{algorithmic}
\end{algorithm}
\vspace{-2em}
\end{figure}
\vspace{-0.5em}

\subsection{Algorithm: Learning and Transferring Model-based Regularizer}
\label{sec:algo}
Our goal is to use the knowledge learned in the source task to learn a good representation in the target task, such that the agent learns the target task more easily than learning from scratch.
Since we focus on developing a generic transfer mechanism, the base learner can be any DRL algorithms. We use $L_{\text{base}}$ to denote the loss function of the base learner. 

As motivated in Section~\ref{sec:representation}, we propose to learn policy-independent dynamics models for producing high-quality representations: (1) $\hat{P}$ which predicts the representation of the next state based on current state representation and action, and (2) $\hat{R}$ which predicts the immediate reward based on current state representation and action. 
For a batch of $N$ transition samples $\{o_i, a_i, o^\prime_i, r_i\}_{i=1}^N$, define the transition loss and the reward loss as:
\setlength\abovedisplayskip{0pt}
\setlength\belowdisplayskip{2pt}
\begin{equation}
    L_P(\phi,\hat{P}) = \frac{1}{N} \sum_{i=1}^N ( \hat{P}(\phi(o_i), a_i) - \bar{\phi}(o^\prime_i) )^2, \quad
    L_R(\phi,\hat{R}) = \frac{1}{N} \sum_{i=1}^N ( \hat{R}(\phi(o_i), a_i) - r_i )^2 \label{eq:model_loss} 
\end{equation}
where $\bar{\phi}(o^\prime_i)$ denotes the representation of the next state $o^\prime_i$ with stop gradients. 
In order to fit a more diverse state distribution, transition samples are drawn from an off-policy buffer, which stores shuffled past trajectories.

\begin{wrapfigure}{r}{0.45\textwidth}
\vspace{-1em}
  \centering
  \includegraphics[width=0.45\textwidth]{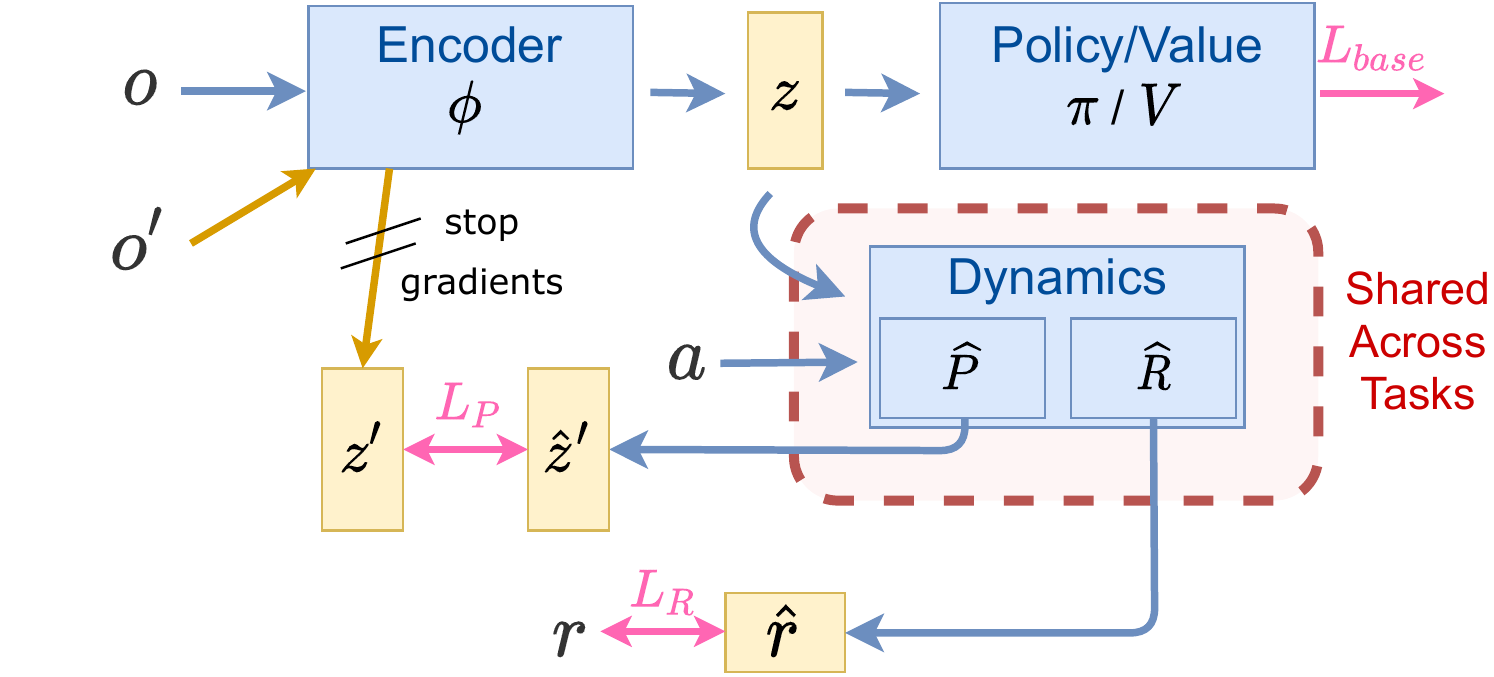}
  \vspace{-1.5em}
  \caption{\small{The architecture of proposed method. $\hat{P}$ and $\hat{R}$ are learned in the source task, then transferred to the target task and fixed during training.}}
  \label{fig:model}
\vspace{-1em}
\end{wrapfigure}

The learning procedures for the source task and the target task are illustrated in Algorithm~\ref{alg:source} and Algorithm~\ref{alg:target}, respectively. Figure~\ref{fig:model} depicts the architecture of the learning model for both source and target tasks. 
$z=\phi(o)$ and $z^\prime=\bar{\phi}(o^\prime)$ are the encoded observation and next observation.
Given the current encoding $z$ and the action $a$, the dynamics models $\hat{P}$ and $\hat{R}$ return the predicted next encoding $\hat{z}^\prime=\hat{P}(z, a)$ and predicted reward $\hat{r}=\hat{R}(z, a)$.
Then the transition loss is the mean squared error (MSE) between $z^\prime$ and $\hat{z}^\prime$ in a batch; the reward loss is the MSE between $r$ and $\hat{r}$ in a batch.

\textbf{In the source task (Algorithm~\ref{alg:source}):} 
dynamics models $\hat{P}$ and $\hat{R}$ are learned by minimizing $L_P$ and $L_R$, which are computed based on a recent copy of encoder called stable encoder $\hat{\phi}\source$ (Line 5).
The computation of the stable encoder is to help the dynamics models converge, as the actual encoder $\phi\source$ changes at every step. Note that a stable copy of the network is widely used in many DRL algorithms (e.g. the target network in DQN), which can be directly regarded as $\hat{\phi}\source$ without maintaining an extra network. 
The actual encoder $\phi\source$ is regularized by the auxiliary dynamics models $\hat{P}$ and $\hat{R}$ (Line 6).

\textbf{In the target task (Algorithm~\ref{alg:target}):} 
dynamics model $\hat{P}$ and $\hat{R}$ are transferred from the source task and fixed during learning. Therefore, the learning of $\phi\target$ is regularized by static dynamics models, which leads to faster and more stable convergence than naively learning an auxiliary task.

\textbf{Relation and Difference with Model-based RL and Bisimulation Metrics.}
Learning a dynamics model is a common technique in model-based RL~\citep{kipf2019contrastive,grimm2020value}, whose goal is to learn an accurate world model and use the model for planning. The dynamics model could be learned on either raw observations or representations. In our framework, we also learn a dynamics model, but the model serves as an auxiliary task, and learning is still performed by the model-free base learner with $L_{\mathrm{base}}$. Bisimulation methods~\citep{castro2020scalable,zhang2020learning} aim to approximate the bisimulation distances among states by learning dynamics models, whereas we do not explicitly measure the distance among states. 
Note that we also do not require a reconstruction loss that is common in literature~\citep{lee2019stochastic}.

\subsection{Theoretical Analysis: Benefits of Transferable Dynamics Model}
\label{sec:theory}

The algorithms introduced in Section~\ref{sec:algo} consist of two designs: learning a latent dynamics model as an auxiliary task, and transferring the dynamics model to the target task.
In this section, we show theoretical justifications and practical advantages of our proposed method. 
We aim to answer the following two questions: \textit{(1) How does learning an auxiliary dynamics model help with representation learning? (2) Is the auxiliary dynamics model transferable?}


For notational simplicity, let $\pact$ and $\ract$ denote the transition and reward functions associated with action $a\in\actions$.
Note that $\pact$ and $\ract$ are independent of any policy. 
We then define the sufficiency of a representation mapping w.r.t. dynamics models as below.
\begin{definition}[Policy-independent Model Sufficiency]
\label{def:model_suf}
For an MDP $\mdp$, a representation mapping $\phi$ is sufficient for its dynamics $(\pact,\ract)_{a\in\actions}$ if $\forall a \in \actions$, there exists functions $\apa: \mathbb{R}^d \to \mathbb{R}^d$ and $\ara: \mathbb{R}^d \to \mathbb{R}$ such that $\forall o\in\states$, $\apa(\phi(o))=\mathbb{E}_{o^\prime\sim \pact(o)}[\phi(o^\prime)]$, $\ara(\phi(o)) = \ract(o)$.
\end{definition}
\vspace{-0.5em}
\textbf{Remarks.} 
(1) $\phi$ is exactly sufficient for dynamics $(\pact,\ract)_{a\in\actions}$ when the transition function $P$ is deterministic. 
(2) If $P$ is stochastic, but we have $\max_{o,a}\| \mathbb{E}_{o^\prime\sim \pact(o)}[\phi(o^\prime)] - \apa(\phi(o))\|\leq\epsilon_P$ and $\max_{o,a}|\ract(o)-\ara(\phi(o))|\leq\epsilon_R$, then $\phi$ is \textbf{$\bm{(\epsilon_P,\epsilon_R)}$-sufficient} for the dynamics of $\mdp$.

Next we show by Proposition~\ref{prop:nonlin_suf} and Theorem~\ref{thm:model_learn_error} that learning sufficiency can be achieved via ensuring model sufficiency.

\begin{proposition}[Learning Sufficiency Induced by Policy-independent Model Sufficiency]
\label{prop:nonlin_suf}
Consider an MDP $\mdp$ with deterministic transition function $P$ and reward function $R$. If $\phi$ is sufficient for $(\pact, \ract)_{a\in\actions}$, then it is sufficient (but not necessarily linearly sufficient) 
for learning in $\mdp$.
\end{proposition}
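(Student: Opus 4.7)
The plan is to show that, for every policy $\pi\in\reppolicy$, the value $Q^\pi(o,a)$ depends on $o$ only through $\phi(o)$; once this is established, a universal approximator $h$ with parameter $\theta_a$ can realize $Q^\pi(\cdot,a)$ as $h(\phi(\cdot);\theta_a)$, so $\apx Q^\pi = Q^\pi$, which is exactly the learning sufficiency of Definition~\ref{def:suf_learn}.

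The central technical step is to introduce the equivalence relation $o\sim o'\iff\phi(o)=\phi(o')$ and show it is preserved by the MDP under any $\pi\in\reppolicy$. Concretely I would first note three closure properties for any two states with $o\sim o'$: (i) by reward sufficiency, $R_a(o)=\ara(\phi(o))=\ara(\phi(o'))=R_a(o')$ for every $a$; (ii) by transition sufficiency and determinism of $P$, $\phi(P_a(o))=\apa(\phi(o))=\apa(\phi(o'))=\phi(P_a(o'))$, i.e.\ the deterministic successors under any action are again equivalent; and (iii) because $\pi\in\reppolicy$ admits some $\tilde\pi$ with $\pi=\tilde\pi\circ\phi$, we have $\pi(o)=\pi(o')$. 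Combining (i)--(iii) by induction on rollout length (or by invoking the Bellman operator and noting that equivalence is preserved by one application, so the fixed point is constant on equivalence classes), the trajectories starting at $o$ and $o'$ under $\pi$ produce identical reward sequences, whence $V^\pi(o)=V^\pi(o')$.

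From here I would conclude: there exist functions $\tilde V^\pi:\phi(\states)\to\mathbb{R}$ and $\tilde Q^\pi_a:\phi(\states)\to\mathbb{R}$ with $V^\pi(o)=\tilde V^\pi(\phi(o))$ and $Q^\pi(o,a)=\ara(\phi(o))+\gamma\tilde V^\pi(\apa(\phi(o)))=\tilde Q^\pi_a(\phi(o))$. Since $\mathcal{H}$ is taken to be the class of universal approximators (e.g.\ neural networks as in the passage preceding Definition~\ref{def:suf_learn}), there is some $\theta^*_a\in\Theta$ realizing $h(\phi(o);\theta^*_a)=\tilde Q^\pi_a(\phi(o))$; by definition of $\apx$ this $\theta^*_a$ is the minimizer and $\apx Q^\pi=Q^\pi$ for all $\pi\in\reppolicy$. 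Finally, for the parenthetical ``but not necessarily linearly sufficient'' claim, I would give a one-line justification: even if $\apa$, $\ara$ are linear in $\phi$, $\tilde V^\pi$ is a discounted sum whose summands depend on $\tilde\pi(\phi(o_t))$, and the action switch makes $\tilde Q^\pi_a$ piecewise (hence generally nonlinear) in $\phi(o)$; a small two-state example with a threshold policy suffices.

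The main obstacle is actually just bookkeeping around the equivalence argument: one must be careful that $V^\pi$ is well-defined as a function of the equivalence class, which in the deterministic case follows because the orbit under $(\pi,P)$ consists entirely of representatives of equivalence classes determined by $\phi$ and $\tilde\pi$. The stochastic case is not required here (Proposition~\ref{prop:nonlin_suf} assumes deterministic $P$), so no concentration or contraction argument is needed; the $(\epsilon_P,\epsilon_R)$-sufficient generalization in the remarks after Definition~\ref{def:model_suf} would be the right setting for Theorem~\ref{thm:model_learn_error}, not here.
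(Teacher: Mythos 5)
Your proposal is correct and follows essentially the same route as the paper's proof: the paper constructs the induced latent MDP $\tilde{\mdp}$ on $\phi(\states)$ and shows by induction on the timestep that trajectories from $o$ and from $\phi(o)$ generate identical reward sequences, which is exactly the quotient-space view of your argument that $\phi$-equivalence is preserved by $(\pact,\ract,\pi)$ and hence $Q^\pi$ is constant on equivalence classes. Your treatment of the ``not necessarily linearly sufficient'' parenthetical (the policy's action switch breaks linearity even when $\apa,\ara$ are linear) also matches the paper's calculation showing $\rpi$ and $\ppi$ involve products with $\pi(\phi(o))$.
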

Proposition~\ref{prop:nonlin_suf} shows that, if the transition is deterministic and the model errors $L_P,L_R$ are zero, then $\phi$ is exactly sufficient for learning.
More generally, if the transition function $P$ is not deterministic, and model fitting is not perfect, the learned representation can still be nearly sufficient for learning as characterized by Theorem~\ref{thm:model_learn_error} below, which is extended from a variant of the value difference bound derived by~\citet{gelada2019deepmdp}. 
Proposition~\ref{prop:nonlin_suf} and Theorem~\ref{thm:model_learn_error} justify that learning the latent dynamics model as an auxiliary task encourages the representation to be sufficient for learning. The model error $L_P$ and $L_R$ defined in Section~\ref{sec:algo} can indicate how good the representation is.
\begin{theorem}
\label{thm:model_learn_error}
For an MDP $\mdp$, if representation mapping $\phi$ is $(\epsilon_P,\epsilon_R)$-sufficient for the dynamics of $\mdp$,
then approximate policy iteration with approximation operator $\apx$ starting from any initial policy $\pi_0\in\reppolicy$ satisfies 
\vspace{-0.5em}
\begin{equation}
    \limsup_{k\to\infty} \|Q^* - Q^{\pi_k} \|_{\infty} \leq \frac{2\gamma^2}{(1-\gamma)^3}(\epsilon_R+\gamma\epsilon_P\lipsvalue).
\end{equation}
where $\lipsvalue$ is an upper bound of the value Lipschitz constant as defined in Appendix~\ref{app:repre}.
\end{theorem}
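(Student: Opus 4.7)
The plan is to reduce Theorem~\ref{thm:model_learn_error} to Lemma~\ref{lem:bound_pi} by showing that $(\epsilon_P,\epsilon_R)$-model-sufficiency implies $\epsilon$-learning-sufficiency for an explicit $\epsilon$. Concretely, I would establish that
\begin{equation*}
\|\apx Q^\pi - Q^\pi\|_\infty \;\le\; \frac{\epsilon_R + \gamma\,\epsilon_P\,\lipsvalue}{1-\gamma}
\quad\text{for every } \pi\in\reppolicy,
\end{equation*}
after which Lemma~\ref{lem:bound_pi} plugs in the bound $\frac{2\gamma^{2}\epsilon}{(1-\gamma)^{2}}$ and yields exactly the advertised rate $\frac{2\gamma^{2}}{(1-\gamma)^{3}}(\epsilon_R+\gamma\epsilon_P\lipsvalue)$.

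The heart of the argument is to exhibit a candidate approximator built from the learned dynamics models. Fix $\pi\in\reppolicy$, so that $\pi(o)=\tilde\pi(\phi(o))$ for some $\tilde\pi:\phi(\states)\to\actions$. I would define a latent Q-function $\tilde Q^{\tilde\pi}$ on $\phi(\states)\times\actions$ as the (unique) fixed point of the Bellman operator for the deterministic latent MDP $(\apa,\ara)_{a\in\actions}$, namely
\begin{equation*}
\tilde Q^{\tilde\pi}(z,a) \;=\; \ara(z) + \gamma\, \tilde Q^{\tilde\pi}\!\bigl(\apa(z),\tilde\pi(\apa(z))\bigr).
\end{equation*}
Since $\tilde Q^{\tilde\pi}(\phi(o),a)$ depends on $o$ only through $\phi(o)$, it lies in the range of $h(\phi(\cdot);\theta_a)$ for some choice of parameters (using universal approximation), and therefore $\|\apx Q^\pi - Q^\pi\|_\infty \le \|\tilde Q^{\tilde\pi}(\phi(\cdot),\cdot) - Q^\pi(\cdot,\cdot)\|_\infty$ by optimality of $\apx$.

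The main technical step is bounding the latter quantity via a one-step simulation/telescoping argument in the style of Gelada et al.'s value-difference bound. Writing $\Delta(o,a):=|Q^\pi(o,a)-\tilde Q^{\tilde\pi}(\phi(o),a)|$ and expanding both Bellman equations, the reward term contributes $|R(o,a)-\ara(\phi(o))|\le\epsilon_R$. For the transition term I would split
\begin{equation*}
\bigl|\,\mathbb{E}_{o'\sim\pact(o)}V^\pi(o') - \tilde V^{\tilde\pi}(\apa(\phi(o)))\,\bigr|
\le \bigl|\mathbb{E}_{o'}[V^\pi(o') - \tilde V^{\tilde\pi}(\phi(o'))]\bigr|
+ \bigl|\mathbb{E}_{o'}\tilde V^{\tilde\pi}(\phi(o')) - \tilde V^{\tilde\pi}(\apa(\phi(o)))\bigr|,
\end{equation*}
controlling the first term by $\gamma\|\Delta\|_\infty$ and the second by $\lipsvalue\cdot\epsilon_P$ using the definition of $\lipsvalue$ (as the Lipschitz constant of $\tilde V^{\tilde\pi}$ on $\phi(\states)$, stated in Appendix~\ref{app:repre}) together with $\|\mathbb{E}_{o'}[\phi(o')]-\apa(\phi(o))\|\le\epsilon_P$. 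Taking suprema gives $\|\Delta\|_\infty\le\epsilon_R+\gamma\epsilon_P\lipsvalue+\gamma\|\Delta\|_\infty$, hence $\|\Delta\|_\infty\le(\epsilon_R+\gamma\epsilon_P\lipsvalue)/(1-\gamma)$, as required.

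The main obstacle is the Lipschitz step: one needs to verify that the relevant value function is indeed $\lipsvalue$-Lipschitz on $\phi(\states)$ so that the inner-expectation vs.\ point-prediction mismatch is cleanly controlled by $\epsilon_P$. If the appendix only guarantees Lipschitzness of the true $V^\pi$ composed with a decoder, one has to either invoke a pushforward/contraction argument to transfer Lipschitzness to $\tilde V^{\tilde\pi}$, or equivalently rephrase $\lipsvalue$ as an upper bound over both $V^\pi$ and its latent counterpart for all $\pi\in\reppolicy$. Once that subtlety is handled, the rest is standard contraction bookkeeping plus a direct invocation of Lemma~\ref{lem:bound_pi}.
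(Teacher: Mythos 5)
Your proposal follows essentially the same route as the paper: establish that $(\epsilon_P,\epsilon_R)$-model-sufficiency yields $\|Q^\pi - \tilde{Q}_{\tilde{\pi}}\circ\phi\|_\infty \le (\epsilon_R+\gamma\lipsvalue\epsilon_P)/(1-\gamma)$ for every $\pi\in\reppolicy$ (the paper imports this value-difference bound directly from Lemma~2 of \citet{gelada2019deepmdp}, whereas you spell out the contraction/telescoping argument behind it), and then invoke Lemma~\ref{lem:bound_pi}. The Lipschitz subtlety you flag is already resolved by the paper's Assumption~\ref{assump:lips}, which imposes $\lipsvalue$-Lipschitzness on the latent value $\tilde{V}_{\tilde{\pi}}$ itself, exactly as your argument requires.
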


\textbf{Transferring Model to Get Better Representation in Target.} 
Although Proposition~\ref{prop:nonlin_suf} shows that learning auxiliary dynamics models benefits representation learning, finding the optimal solution is non-trivial since one still has to learn $\hat{P}$ and $\hat{R}$. Therefore, the main idea of our algorithm is to transfer the dynamics models $\hat{P},\hat{R}$ from the source task to the target task, to ease the learning in the target task. Theorem~\ref{thm:transfer_suf} below guarantees that transferring the dynamics models is feasible. Our experimental result in Section~\ref{sec:exp} verifies that learning with transferred and fixed dynamics models outperforms learning with randomly initialized dynamics models.
\begin{theorem}[Transferable Dynamics Models]
\label{thm:transfer_suf}
Consider a source task $\mdp\source$ and a target task $\mdp\target$ with deterministic transition functions. 
Suppose $\phi\source$ is sufficient for $(\pact\source, \ract\source)_{a\in\actions}$ with functions $\apa, \ara$, then there exists a representation $\phi\target$ satisfying
$\apa(\phi(o))=\mathbb{E}_{o^\prime\sim \pact\target(o)}[\phi(o^\prime)]$, $\ara(\phi(o)) = \ract\target(o)$, for all $o\in\states\target$, and $\phi\target$ is sufficient for learning in $\mdp\target$.
\end{theorem}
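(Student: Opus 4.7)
The natural candidate is $\phi\target := \phi\source \circ f$, where $f:\states\target\to\states\source$ is the mapping guaranteed by Assumption~\ref{assum:similarity}. My plan is to (i) verify the reward identity $\ara(\phi\target(o)) = \ract\target(o)$, (ii) verify the transition identity $\apa(\phi\target(o)) = \mathbb{E}_{o'\sim\pact\target(o)}[\phi\target(o')]$, and (iii) invoke Proposition~\ref{prop:nonlin_suf} to lift policy-independent model sufficiency into sufficiency for learning in $\mdp\target$.

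The reward identity is immediate: for any $o\in\states\target$ and $a\in\actions$, the assumed sufficiency of $\phi\source$ for the source dynamics gives $\ara(\phi\source(f(o)))=\ract\source(f(o))$, and Assumption~\ref{assum:similarity} identifies $\ract\source(f(o))$ with $\ract\target(o)$. So $\ara(\phi\target(o))=\ract\target(o)$ by the definition of $\phi\target$.

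For the transition identity I would first translate Assumption~\ref{assum:similarity}, which is stated at the density level, into a pointwise map-commutation in the deterministic regime. Writing $g\source_a$ and $g\target_a$ for the deterministic successor maps associated with action $a$, setting $o_j\target=g\target_a(o_i\target)$ makes $P\target(o_j\target\mid o_i\target,a)=1$, which by Assumption~\ref{assum:similarity} forces $P\source(f(o_j\target)\mid f(o_i\target),a)=1$, i.e., $f\circ g\target_a = g\source_a\circ f$. Chaining this with sufficiency of $\phi\source$ for the source dynamics evaluated at $f(o)$, I obtain $\apa(\phi\target(o)) = \apa(\phi\source(f(o))) = \phi\source(g\source_a(f(o))) = \phi\source(f(g\target_a(o))) = \phi\target(g\target_a(o))$, and the right-hand side equals $\mathbb{E}_{o'\sim\pact\target(o)}[\phi\target(o')]$ since $\pact\target(\cdot\mid o)$ is a point mass at $g\target_a(o)$. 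Steps (i) and (ii) together say that $\phi\target$ is sufficient for $(\pact\target,\ract\target)_{a\in\actions}$ in the sense of Definition~\ref{def:model_suf}, so Proposition~\ref{prop:nonlin_suf}, applicable since $\mdp\target$ has deterministic transitions, immediately yields sufficiency for learning in $\mdp\target$.

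The only delicate piece is step (ii): one has to be comfortable reading Assumption~\ref{assum:similarity} as the commutation relation $f\circ g\target_a=g\source_a\circ f$ in the deterministic case. Everything else is a clean composition of definitions, and the construction $\phi\target=\phi\source\circ f$ is robust to $f$ being many-to-one or non-surjective, as noted in the remarks following Assumption~\ref{assum:similarity}. Note also that $f$ is used only as an analytical device: it is never queried by the algorithm, which is consistent with the paper's no-inter-task-mapping setting.
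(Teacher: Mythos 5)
Your proposal is correct and follows essentially the same route as the paper's proof: the same construction $\phi\target = \phi\source\circ f$, the same use of Assumption~\ref{assum:similarity} to commute $f$ with the deterministic successor maps, and the same appeal to Proposition~\ref{prop:nonlin_suf} to convert model sufficiency into learning sufficiency. Your explicit justification that the probability-level statement of Assumption~\ref{assum:similarity} forces the pointwise commutation $f\circ g\target_a = g\source_a\circ f$ in the deterministic case is a detail the paper leaves implicit, but it is the intended reading.
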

Theorem~\ref{thm:transfer_suf} shows that the learned latent dynamics models $\hat{P},\hat{R}$ are transferable from the source task to the target task. 
For simplicity, Theorem~\ref{thm:transfer_suf} focuses on exact sufficiency as in Proposition~\ref{prop:nonlin_suf}, but it can be easily extended to $\epsilon$-sufficiency if combined with Theorem~\ref{thm:model_learn_error}.
Proofs for Proposition~\ref{prop:nonlin_suf}, Theorem~\ref{thm:model_learn_error} and Theorem~\ref{thm:transfer_suf} are all provided in Appendix~\ref{app:proofs}.


\textbf{Trade-off between Approximation Complexity and Representation Complexity.}
As suggested by Proposition~\ref{prop:nonlin_suf}, fitting policy-independent dynamics encourages the representation to be sufficient for learning, but not necessarily linearly sufficient. Therefore, we suggest using a \textit{non-linear policy/value head} following the representation to reduce the approximation error. 
Linear sufficiency can be achieved if $\phi$ is made linearly sufficient for $\ppi$ and $\rpi$ for all $\pi\in\reppolicy$, where $\ppi$ and $\rpi$ are transition and reward functions induced by policy $\pi$ (Proposition~\ref{prop:lin_suf}, Appendix~\ref{app:repre}).
However, using this method for transfer learning is expensive in terms of both computation and memory, as it requires to learn $\ppi$ and $\rpi$ for many different $\pi$'s and store these models for transferring to the target task.
Therefore, there is a trade-off between approximation complexity and representation complexity. Learning a linearly sufficient representation reduces the complexity of the approximation operator. But it requires more complexity in the representation itself as it has to satisfy much more constraints. 
To develop a practical and efficient transfer method, we use a slightly more complex approximation operator (non-linear policy head) while keeping the auxiliary task simple and easy to transfer across tasks. Please see Appendix~\ref{app:repre} for more detailed discussion about linear sufficiency.

\vspace{-0.5em}
\section{Related Work}
\label{sec:related}
\vspace{-0.5em}

\textbf{Transfer RL across Observation Feature Spaces.}
Transferring knowledge between tasks with different observation spaces has been studied for years. 
Many existing approaches\citep{taylor2007transfer,mann2013directed,brys2015policy} require an explicit mapping between the source and target observation spaces, which may be hard to obtain in practice.
\citet{raiman2019neural} introduce network surgery that deals with the change in the input features by determining which components of a neural network model should be transferred and which require retraining. However, it requires knowledge of the input feature maps, and is not designed for drastic changes, e.g. vector to pixel.
\citet{sun2020temple} propose a provably sample-efficient transfer learning algorithm that works for different observation spaces without knowing any inter-task mapping, but the algorithm is mainly designed for tabular RL and model-based RL which uses the model to plan for a policy, different from our setting.
\citet{gupta2017learning} achieve transfer learning between two different tasks by learning an invariant feature space, with a key time-based alignment assumption. We empirically compared this method with our proposed transfer algorithm in Section~\ref{sec:exp}.
Our work is also related to state abstraction in block MDPs, as studied by \citet{zhang2020invariant}. But the problem studied in~\citet{zhang2020invariant} is a multi-task setting where the agent aims to learn generalizable abstract states from a series of tasks.
Another related topic is domain adaptation in RL~\citep{higgins2017darla,eysenbach2020off,zhang2020learning}, where the target observation space (e.g. real world) is different from the source observation (e.g. simulator). However, domain adaptation does not assume drastic observation changes (e.g. changed dimension). Moreover, the aim of domain adaptation is usually zero-shot generalization to new observations, thus prior knowledge or a few samples of the target domain is often needed~\citep{eysenbach2020off}.

\textbf{Representation Learning in RL.}
In environments with rich observations, representation learning is crucial for the efficiency of RL methods. Learning unsupervised auxiliary tasks~\citep{jaderberg2016reinforcement} is shown to be effective for learning a good representation. 
The relationship between learning policy-dependent auxiliary tasks and learning good representations has been studied in some prior works~\citep{bellemare2019geometric,dabney2020the,lyle2021effect}, while our focus is to learn policy-independent auxiliary tasks to facilitate transfer learning. 
Using latent prediction models to regularize representation has been shown to be effective for various types of rich observations~\citep{guo2020bootstrap,lee2019stochastic}.  
\citet{gelada2019deepmdp} theoretically justify that learning latent dynamics model guarantees the quality of the learned representation, while we further characterize the relationship between representation and learning performance, and we utilize dynamics models to improve transfer learning.
\citet{zhang2020learning} use a bisimulation metric to learn latent representations that are invariant to task-irrelevant details in observation. As pointed out by~\citet{achille2018emergence}, invariant and sufficient representation is indeed minimal sufficient, so it is an interesting future direction to combine our method with bisimulation metric to learn minimal sufficient representations.
There is also a line of work using contrastive learning to train an encoder for pixel observations~\citep{srinivas2020curl,yarats2021reinforcement,stooke2021decoupling}, which usually pre-train an encoder based on image samples using self-supervised learning. However, environment dynamics are usually not considered during pre-training.
Our algorithm can be combined with these contrastive learning approaches to further improve learning performance in the target task.

\vspace{-0.5em}
\section{Experimental Evaluation}
\label{sec:exp}


We empirically evaluate our transfer learning algorithm in various environments and multiple observation-change scenarios.
Detailed experiment setup and hyperparameters are in Appendix~\ref{app:exp}.

\textbf{Baselines.}
To verify the effectiveness of our proposed transfer learning method, we compare our transfer learning algorithm with 4 baselines: 
(1) \textit{Single}: a single-task base learner. 
(2) \textit{Auxiliary}: learns auxiliary models from scratch to regularize representation. 
(3) \textit{Fine-tune}: loads and freezes the source policy head, and retrains an encoder in the target task. 
(4) \textit{Time-aligned}~\citep{gupta2017learning}: supposes the target task and the source task proceed to the same latent state given the same action sequence, and pre-trains a target-task encoder with saved source-task trajectories. More details of baseline implementations are in Appendix~\ref{app:exp_baseline}.

\textbf{Scenarios.}
As motivated in Section~\ref{sec:intro}, there are many scenarios where one can benefit from transfer learning across observation feature spaces. We evaluate our proposed transfer algorithm in 7 environments that fit various scenarios, to simulate real-world applications: \\
(1) \textit{Vec-to-pixel:} a novel and challenging scenario, where the source task has low-dimensional vector observations and the target task has pixel observations. We use 3 vector-input environments CartPole, Acrobot and Cheetah-Run as source tasks, and use the rendered image in the target task. \\
(2) \textit{More-sensor:} another challenging scenario where the target task has a lot more sensors than the source task. We use 3 MuJoCo environments: HalfCheetah, Hopper and Walker2d, whose original observation dimensions are 17, 11 and 17, respectively. We add mass-based inertia and velocity (provided by MuJoCo's API), resulting in 145, 91, 145 dimensions in the corresponding target tasks.  \\
(3) \textit{Broken-sensor:} we use an existing game 3DBall contained in the Unity ML-Agents Toolkit~\citep{juliani2018unity}, which has two different observation specifications that naturally fit our transfer setting: the source observation has 8 features containing the velocity of the ball; the target observation does not have the ball's velocity, thus the agent has to stack the past 9 frames to infer the velocity. Please see Appendix~\ref{app:exp_env} for more detailed descriptions of all the 7 environments.

\textbf{Base DRL Learners.}
What we propose is a transfer learning mechanism that can be combined with any existing DRL methods. For environments with discrete action spaces (CartPole, Acrobot), we use the DQN algorithm~\citep{mnih2015human}, while for environments with continuous action spaces (Cheetah-Run, HalfCheetah, Hopper, Walker2d, 3DBall), we use the SAC algorithm~\citep{haarnoja2018soft}. To ensure a  fair comparison, we use the same base DRL learner with the same hyperparameter settings for all tested methods, as detailed in Appendix~\ref{app:exp_drl}. As is common in prior works, our implementation of the RL algorithms is mostly a proof of concept, thus many advanced training techniques are not included (e.g. Rainbow DQN).

\begin{figure}[!t]
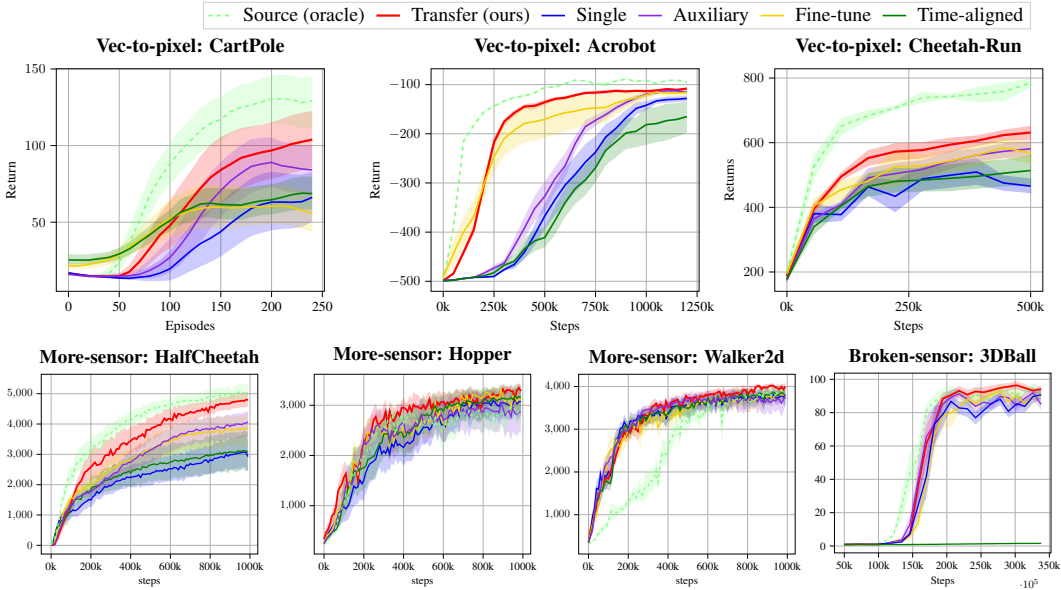

\vspace{-2.5em}
\centering
 \begin{subfigure}[t]{0.01\columnwidth}
  \centering
  \input{figs/cart_new_baseline_smooth}
 \end{subfigure}
 \hfill
 \begin{subfigure}[t]{0.31\columnwidth}
  \centering
  \input{figs/acrobot} 
 \end{subfigure}
  \hfill
  \begin{subfigure}[t]{0.31\columnwidth}
  \centering
\begin{tikzpicture}[scale=0.52]


\begin{axis}[
legend columns=6, 
legend cell align={left},
legend style={at={(0, 1.15)}, anchor=north west, fill opacity=0, draw opacity=0, text opacity=0, draw=white!80!black, font=\Large},
tick align=outside,
tick pos=left,
title={\Large{\textbf{Vec-to-pixel: Cheetah-Run}}},
x grid style={white!69.0196078431373!black},
xlabel={Steps},
xmajorgrids,
xmin=-0.45, xmax=9.45,
xtick style={color=black},
xtick={0,4.5,9},
xticklabels={0k,250k,500k},
y grid style={white!69.0196078431373!black},
ylabel={Returns},
ymajorgrids,
ymin=140.251007758175, ymax=828.649451241197,
ytick style={color=black}
]
\path [draw=sourcecolor, fill=sourcecolor, opacity=0.25]
(axis cs:0,207.902644970233)
--(axis cs:0,191.831502350144)
--(axis cs:1,492.868804727583)
--(axis cs:2,625.441181501587)
--(axis cs:3,669.529953663893)
--(axis cs:4,703.79727314691)
--(axis cs:5,724.881866761351)
--(axis cs:6,737.901576471254)
--(axis cs:7,726.850831995898)
--(axis cs:8,729.986990412421)
--(axis cs:9,771.744283134398)
--(axis cs:9,797.35861290106)
--(axis cs:9,797.35861290106)
--(axis cs:8,784.391168361048)
--(axis cs:7,770.653134139791)
--(axis cs:6,747.651307325264)
--(axis cs:5,755.926210281424)
--(axis cs:4,713.706105824865)
--(axis cs:3,700.168212600787)
--(axis cs:2,673.617267958567)
--(axis cs:1,557.991311233892)
--(axis cs:0,207.902644970233)
--cycle;

\path [draw=singlecolor, fill=singlecolor, opacity=0.25]
(axis cs:0,198.599346162565)
--(axis cs:0,179.90648532961)
--(axis cs:1,352.491600004381)
--(axis cs:2,359.113169894587)
--(axis cs:3,437.584637139243)
--(axis cs:4,386.752610030574)
--(axis cs:5,452.818534331418)
--(axis cs:6,462.105808570268)
--(axis cs:7,471.191969861788)
--(axis cs:8,453.048903454905)
--(axis cs:9,445.308119260931)
--(axis cs:9,487.638327319847)
--(axis cs:9,487.638327319847)
--(axis cs:8,497.236275536688)
--(axis cs:7,550.294100146159)
--(axis cs:6,540.011344156501)
--(axis cs:5,523.863952621867)
--(axis cs:4,482.942709818012)
--(axis cs:3,491.802205749171)
--(axis cs:2,394.895808965121)
--(axis cs:1,410.351959222161)
--(axis cs:0,198.599346162565)
--cycle;

\path [draw=ourcolor, fill=ourcolor, opacity=0.25]
(axis cs:0,194.695615934737)
--(axis cs:0,181.73516942581)
--(axis cs:1,381.591782834057)
--(axis cs:2,484.064497649354)
--(axis cs:3,529.471074610907)
--(axis cs:4,545.96857085794)
--(axis cs:5,552.831833543155)
--(axis cs:6,571.813618607036)
--(axis cs:7,590.290738588118)
--(axis cs:8,607.959134678948)
--(axis cs:9,609.654794634609)
--(axis cs:9,650.555468262409)
--(axis cs:9,650.555468262409)
--(axis cs:8,638.440856102104)
--(axis cs:7,620.590385737189)
--(axis cs:6,613.799976371042)
--(axis cs:5,599.681531704714)
--(axis cs:4,597.077679169069)
--(axis cs:3,574.954810771127)
--(axis cs:2,506.414795562167)
--(axis cs:1,409.801965262942)
--(axis cs:0,194.695615934737)
--cycle;

\path [draw=auxcolor, fill=auxcolor, opacity=0.25]
(axis cs:0,179.76384890887)
--(axis cs:0,171.541846098312)
--(axis cs:1,353.249077030134)
--(axis cs:2,400.089696733579)
--(axis cs:3,464.98596215118)
--(axis cs:4,475.899457545471)
--(axis cs:5,491.952488467856)
--(axis cs:6,520.019062792394)
--(axis cs:7,542.964427997587)
--(axis cs:8,553.56110178082)
--(axis cs:9,559.676898084762)
--(axis cs:9,602.783681967598)
--(axis cs:9,602.783681967598)
--(axis cs:8,596.508936046606)
--(axis cs:7,580.565456961485)
--(axis cs:6,571.983648514414)
--(axis cs:5,544.704383653909)
--(axis cs:4,536.089674579693)
--(axis cs:3,519.814345216449)
--(axis cs:2,418.468699548445)
--(axis cs:1,381.977559071068)
--(axis cs:0,179.76384890887)
--cycle;

\path [draw=tunecolor, fill=tunecolor, opacity=0.25]
(axis cs:0,203.892576796391)
--(axis cs:0,184.187514108616)
--(axis cs:1,384.909447632736)
--(axis cs:2,424.572008051)
--(axis cs:3,458.04122593703)
--(axis cs:4,463.658090110088)
--(axis cs:5,497.373296015004)
--(axis cs:6,510.131259629636)
--(axis cs:7,525.89085344572)
--(axis cs:8,554.496684504139)
--(axis cs:9,536.684024504506)
--(axis cs:9,603.356591580685)
--(axis cs:9,603.356591580685)
--(axis cs:8,611.549577950775)
--(axis cs:7,592.393702060146)
--(axis cs:6,574.212965662729)
--(axis cs:5,562.084659267693)
--(axis cs:4,577.500044361388)
--(axis cs:3,504.106605545737)
--(axis cs:2,480.407446928424)
--(axis cs:1,425.887864804737)
--(axis cs:0,203.892576796391)
--cycle;

\path [draw=aligncolor, fill=aligncolor, opacity=0.25]
(axis cs:0,182.041273462905)
--(axis cs:0,175.877699133955)
--(axis cs:1,318.312366426722)
--(axis cs:2,394.704002562536)
--(axis cs:3,434.674450710602)
--(axis cs:4,447.175654593568)
--(axis cs:5,450.486961511993)
--(axis cs:6,451.378458960343)
--(axis cs:7,459.322899371637)
--(axis cs:8,466.093117990537)
--(axis cs:9,471.559963321842)
--(axis cs:9,559.337017586039)
--(axis cs:9,559.337017586039)
--(axis cs:8,546.814678963202)
--(axis cs:7,539.359806817437)
--(axis cs:6,531.348091142683)
--(axis cs:5,521.783069423896)
--(axis cs:4,519.542444827647)
--(axis cs:3,504.178721909893)
--(axis cs:2,416.065233410349)
--(axis cs:1,363.253835734476)
--(axis cs:0,182.041273462905)
--cycle;

\addplot [very thick, dashed, sourcecolor]
table {%
0 200.655692826384
1 527.323075930595
2 651.178701691926
3 683.715487147097
4 708.245013827208
5 739.2891816659
6 742.452679838358
7 749.726465995097
8 758.672233885867
9 784.497096778985
};
\addlegendentry{Source}

\addplot [very thick, singlecolor]
table {%
0 188.881352921584
1 380.588011516177
2 377.571661941599
3 463.470845755998
4 434.298247272079
5 487.882063115174
6 499.00459206061
7 509.321815349304
8 475.440872186581
9 465.619134436305
};
\addlegendentry{Without Transfer}
\addplot [ultra thick, ourcolor]
table {%
0 188.533786432264
1 396.294870203832
2 495.348353775272
3 552.39623947499
4 572.295017352543
5 577.135093779887
6 593.466315660652
7 605.871462120734
8 623.459809100769
9 631.421007868601
};
\addlegendentry{With Transfer}
\addplot [very thick, auxcolor]
table {%
0 175.601051642895
1 365.275510941878
2 408.97268576091
3 491.615935301933
4 504.187546676194
5 517.282095880204
6 542.271566931003
7 561.404754995274
8 573.196614936404
9 581.147007512805
};
\addlegendentry{Auxiliary Tasks}
\addplot [very thick, tunecolor]
table {%
0 194.328733547028
1 404.690597848387
2 454.402345358587
3 481.630046464068
4 525.311715481507
5 528.78776446986
6 543.11126825726
7 561.528569107191
8 584.112119844749
9 568.531585324439
};
\addlegendentry{Fine-tune}
\addplot [very thick, aligncolor]
table {%
0 178.628417197807
1 340.036721283328
2 404.526601663728
3 465.04690228254
4 480.408384178008
5 484.224520196241
6 489.740666266114
7 495.70885221505
8 505.556841140709
9 514.006344102661
};
\addlegendentry{Time-aligned} 
\end{axis}

\end{tikzpicture} 
 \end{subfigure}
 \vspace{-1em}
 
 \begin{subfigure}[t]{0.255\columnwidth}
  \centering
  \input{figs/halfcheetah_rebuttal}
 \end{subfigure}
 \hfill
 \begin{subfigure}[t]{0.245\columnwidth}
  \centering
  \input{figs/hopper_rebuttal} 
 \end{subfigure}
 \hfill
 \begin{subfigure}[t]{0.245\columnwidth}
  \centering
  \input{figs/walker2d_rebuttal} 
 \end{subfigure}
 \hfill
 \begin{subfigure}[t]{0.235\columnwidth}
  \centering
  \input{figs/new_ball_withsource} 
 \end{subfigure}
 \vspace{-2em}
 \caption{
 Our proposed transfer method outperforms all baselines in target tasks over all tested scenarios. (The dashed \textcolor{sourcecolor}{\textbf{green}} lines are the learning curves in source tasks.) Results are averaged over 10 random seeds. 
 }
\label{fig:all}
\vspace{-1.5em}
\end{figure}


\textbf{Results.}
Experimental results on all tested environments are shown in Figure~\ref{fig:all}. We can see that our proposed transfer method learns significantly better than the single-task learner, and also outperforms all baselines in the challenging target tasks. 
Our transfer method outperforms Auxiliary since it transfers dynamics model from the source task instead of learning it from scratch, and outperforms Fine-tine since it regularizes the challenging encoder learning with a model-based regularizer.
The Time-aligned method, although requires additional pre-training that is not shown in the figures, does not work better than Single in most environments, because the time-based alignment assumption may not hold as discussed in Appendix~\ref{app:exp_baseline}.
In some environments (e.g. Hopper, Walker2d, 3DBall), our transfer algorithm even achieves better asymptotic performance than the source-task policy, which suggests that our method can be used for improving the policy with incremental observation design.
\textit{To the best of our knowledge, we are the first to achieve effective knowledge transfer from a vector-input environment to a pixel-input environment without any pre-defined mappings.}

\begin{wrapfigure}{r}{0.35\textwidth}
\vspace{-2em}
  \begin{center}
    \input{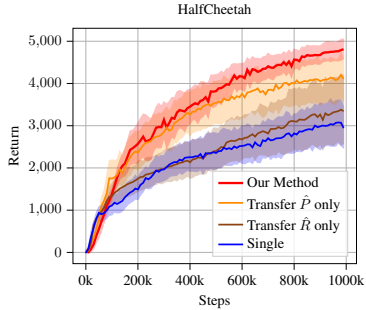}
  \end{center}
  \vspace{-1.5em}
  \caption{Ablation Study}
  \label{fig:ablation_halfcheetah}
\vspace{-1.5em}
\end{wrapfigure}

\textbf{Ablation Study and Hyper-parameter Test.}
To verify the effectiveness of proposed transfer method, we conduct ablation study and compare our method with its two variants: only transferring the transition model $\hat{P}$ and only transferring the reward model $\hat{R}$. Figure~\ref{fig:ablation_halfcheetah} shows the comparison in HalfCheetah, and Appendix~\ref{app:exp_results} demonstrates more results. We find that all the variants of our method can make some improvements, which suggests that \textit{transferring $\hat{P}$ and $\hat{R}$ are both effective designs for accelerating the target task learning.}
Figure~\ref{fig:cart_ablation} in Appendix~\ref{app:exp_results}
shows another ablation study where we investigate different selections of model regularizers and policy heads. 
In Algorithm~\ref{alg:target}, a hyper-parameter $\lambda$ is needed to control the weight of the transferred model-based regularizer. Figure~\ref{fig:hyper} in Appendix~\ref{app:exp_results} shows that, for a wide range of $\lambda$'s, the agent consistently outperforms the single-task learner. 

\textbf{Potential Limitations and Solutions.}
As Figure~\ref{fig:all} shows, in some environments such as HalfCheetah, our transfer algorithm significantly outperforms baselines without transfer. But in Walker2d, the improvement is less significant, although transferring is still better than not transferring.
This phenomenon is common in model-based learning~\citep{nagabandi2018neural}, as state predicting in Walker2d is harder than that in HalfCheetah due to the complexity of the dynamics.
Therefore, we suggest using our method to transfer when the learned models $(\hat{P},\hat{R})$ in the source task are relatively good (error is low).
More techniques of improving model-based learning, such as bisimulation~\citep{zhang2020learning,castro2020scalable}, can be applied to further improve the transfer performance.



\vspace{-0.5em}
\section{Conclusion}
\label{sec:conclusion}
\vspace{-0.5em}
In this paper, we identify and propose a solution to an important but rarely studied problem: transferring knowledge between tasks with drastically different observation spaces where inter-task mappings are not available. 
We propose to learn a latent dynamics model in the source task and transfer the model to the target task to facilitate representation learning. Theoretical analysis and empirical study justify the effectiveness of the proposed algorithm.
\section*{Acknowledgements}
This work is supported by Unity Technologies, National Science Foundation IIS-1850220 CRII Award 030742-00001, DOD-DARPA-Defense Advanced Research Projects Agency Guaranteeing AI Robustness against Deception (GARD), and Adobe, Capital One and JP Morgan faculty fellowships.
\section*{Ethics Statement}

Transfer learning aims to apply previously learned experience to new tasks to improve learning efficiency, which is becoming more and more important nowadays for training intelligent agents in complex systems.
This paper focuses on a practical but rarely studied transfer learning scenario, where the observation feature space of an RL environment is subject to drastic changes. Driven by theoretical analysis on representation learning and its relation to latent dynamics learning, we propose a novel algorithm that transfers knowledge between tasks with totally different observation spaces, without any prior knowledge of an inter-task mapping.

This work can benefit many applications as suggested by the examples below. \\
(1) In many real-life environments where deep RL is used (e.g. navigating in a building), the underlying dynamics (e.g. the structure of the building) are usually fixed, but what features the observation space has is designed by human developers (e.g. what sensors are installed) and thus may change frequently during the development. When the agent gets equipped with better sensors, our algorithm makes it possible to reuse previously learned models when learning with the new sensors. \\
(2) An agent usually learns better with a compact observation space (e.g. a low-dimensional vector space containing its location and the goal's location) than a rich/noisy observation space (e.g. an image containing the goal). However, a compact observation is usually more difficult to construct in practice as it may require expert knowledge and human work. In this case, one can extract compact observations in a few samples and pre-train a policy with our Algorithm~\ref{alg:source}, then train the agent in the real environment with rich observations with our Algorithm~\ref{alg:target} using the learned dynamics models. Our experiment in Figure~\ref{fig:all} shows that the learning efficiency in the rich-observation environment can be significantly improved with our proposed transfer method.


\section*{Reproducibility Statement}

For theoretical results, we provide concrete proofs in Appendix~\ref{app:proofs} and Appendix~\ref{app:avi}. For empirical results, we illustrate implementation details in Appendix~\ref{app:exp}. The source code and running instructions are provided in the supplementary materials.

\bibliography{references}

\begin{thebibliography}{40}
\providecommand{\natexlab}[1]{#1}
\providecommand{\url}[1]{\texttt{#1}}
\expandafter\ifx\csname urlstyle\endcsname\relax
  \providecommand{\doi}[1]{doi: #1}\else
  \providecommand{\doi}{doi: \begingroup \urlstyle{rm}\Url}\fi

\bibitem[Achille \& Soatto(2018)Achille and Soatto]{achille2018emergence}
Alessandro Achille and Stefano Soatto.
\newblock Emergence of invariance and disentanglement in deep representations.
\newblock \emph{The Journal of Machine Learning Research}, 19\penalty0
  (1):\penalty0 1947--1980, 2018.

\bibitem[Bellemare et~al.(2019)Bellemare, Dabney, Dadashi, Ali~Taiga, Castro,
  Le~Roux, Schuurmans, Lattimore, and Lyle]{bellemare2019geometric}
Marc Bellemare, Will Dabney, Robert Dadashi, Adrien Ali~Taiga, Pablo~Samuel
  Castro, Nicolas Le~Roux, Dale Schuurmans, Tor Lattimore, and Clare Lyle.
\newblock A geometric perspective on optimal representations for reinforcement
  learning.
\newblock \emph{Advances in neural information processing systems},
  32:\penalty0 4358--4369, 2019.

\bibitem[Bertsekas \& Tsitsiklis(1996)Bertsekas and
  Tsitsiklis]{BertsekasTsitsiklis96}
D.~P. Bertsekas and J.~N. Tsitsiklis.
\newblock \emph{Neuro-dynamic programming.}
\newblock Athena Scientific, Belmont, MA, 1996.

\bibitem[Bohez et~al.(2017)Bohez, Verbelen, De~Coninck, Vankeirsbilck, Simoens,
  and Dhoedt]{bohez2017sensor}
Steven Bohez, Tim Verbelen, Elias De~Coninck, Bert Vankeirsbilck, Pieter
  Simoens, and Bart Dhoedt.
\newblock Sensor fusion for robot control through deep reinforcement learning.
\newblock In \emph{2017 IEEE/RSJ International Conference on Intelligent Robots
  and Systems (IROS)}, pp.\  2365--2370, 2017.
\newblock \doi{10.1109/IROS.2017.8206048}.

\bibitem[Brys et~al.(2015)Brys, Harutyunyan, Taylor, and
  Now{\'e}]{brys2015policy}
Tim Brys, Anna Harutyunyan, Matthew~E Taylor, and Ann Now{\'e}.
\newblock Policy transfer using reward shaping.
\newblock In \emph{AAMAS}, pp.\  181--188, 2015.

\bibitem[Castro(2020)]{castro2020scalable}
Pablo~Samuel Castro.
\newblock Scalable methods for computing state similarity in deterministic
  markov decision processes.
\newblock In \emph{Proceedings of the AAAI Conference on Artificial
  Intelligence}, volume~34, pp.\  10069--10076, 2020.

\bibitem[Chen et~al.(2020)Chen, Kornblith, Norouzi, and Hinton]{chen2020simple}
Ting Chen, Simon Kornblith, Mohammad Norouzi, and Geoffrey Hinton.
\newblock A simple framework for contrastive learning of visual
  representations.
\newblock In \emph{International conference on machine learning}, pp.\
  1597--1607. PMLR, 2020.

\bibitem[Dabney et~al.(2020)Dabney, Barreto, Rowland, Dadashi, Quan, Bellemare,
  and Silver]{dabney2020the}
Will Dabney, Andr{\'{e}} Barreto, Mark Rowland, Robert Dadashi, John Quan,
  Marc~G. Bellemare, and David Silver.
\newblock The value-improvement path: Towards better representations for
  reinforcement learning.
\newblock \emph{CoRR}, abs/2006.02243, 2020.
\newblock URL \url{https://arxiv.org/abs/2006.02243}.

\bibitem[Eysenbach et~al.(2020)Eysenbach, Asawa, Chaudhari, Levine, and
  Salakhutdinov]{eysenbach2020off}
Benjamin Eysenbach, Swapnil Asawa, Shreyas Chaudhari, Sergey Levine, and Ruslan
  Salakhutdinov.
\newblock Off-dynamics reinforcement learning: Training for transfer with
  domain classifiers.
\newblock \emph{arXiv preprint arXiv:2006.13916}, 2020.

\bibitem[Ganesh et~al.(2019)Ganesh, Vadori, Xu, Zheng, Reddy, and
  Veloso]{ganesh2019reinforcement}
Sumitra Ganesh, Nelson Vadori, Mengda Xu, Hua Zheng, Prashant Reddy, and
  Manuela Veloso.
\newblock Reinforcement learning for market making in a multi-agent dealer
  market.
\newblock \emph{arXiv preprint arXiv:1911.05892}, 2019.

\bibitem[Gelada et~al.(2019)Gelada, Kumar, Buckman, Nachum, and
  Bellemare]{gelada2019deepmdp}
Carles Gelada, Saurabh Kumar, Jacob Buckman, Ofir Nachum, and Marc~G Bellemare.
\newblock Deepmdp: Learning continuous latent space models for representation
  learning.
\newblock In \emph{International Conference on Machine Learning}, pp.\
  2170--2179. PMLR, 2019.

\bibitem[Grimm et~al.(2020)Grimm, Barreto, Singh, and Silver]{grimm2020value}
Christopher Grimm, Andr{\'e} Barreto, Satinder Singh, and David Silver.
\newblock The value equivalence principle for model-based reinforcement
  learning.
\newblock \emph{arXiv preprint arXiv:2011.03506}, 2020.

\bibitem[Guo et~al.(2020)Guo, Pires, Piot, Grill, Altch{\'e}, Munos, and
  Azar]{guo2020bootstrap}
Zhaohan~Daniel Guo, Bernardo~Avila Pires, Bilal Piot, Jean-Bastien Grill,
  Florent Altch{\'e}, R{\'e}mi Munos, and Mohammad~Gheshlaghi Azar.
\newblock Bootstrap latent-predictive representations for multitask
  reinforcement learning.
\newblock In \emph{International Conference on Machine Learning}, pp.\
  3875--3886. PMLR, 2020.

\bibitem[Gupta et~al.(2017)Gupta, Devin, Liu, Abbeel, and
  Levine]{gupta2017learning}
Abhishek Gupta, Coline Devin, YuXuan Liu, Pieter Abbeel, and Sergey Levine.
\newblock Learning invariant feature spaces to transfer skills with
  reinforcement learning.
\newblock \emph{arXiv preprint arXiv:1703.02949}, 2017.

\bibitem[Haarnoja et~al.(2018)Haarnoja, Zhou, Abbeel, and
  Levine]{haarnoja2018soft}
Tuomas Haarnoja, Aurick Zhou, Pieter Abbeel, and Sergey Levine.
\newblock Soft actor-critic: Off-policy maximum entropy deep reinforcement
  learning with a stochastic actor.
\newblock In \emph{International conference on machine learning}, pp.\
  1861--1870. PMLR, 2018.

\bibitem[Hafner et~al.(2019)Hafner, Lillicrap, Ba, and
  Norouzi]{hafner2019dream}
Danijar Hafner, Timothy Lillicrap, Jimmy Ba, and Mohammad Norouzi.
\newblock Dream to control: Learning behaviors by latent imagination.
\newblock \emph{arXiv preprint arXiv:1912.01603}, 2019.

\bibitem[Higgins et~al.(2017)Higgins, Pal, Rusu, Matthey, Burgess, Pritzel,
  Botvinick, Blundell, and Lerchner]{higgins2017darla}
Irina Higgins, Arka Pal, Andrei Rusu, Loic Matthey, Christopher Burgess,
  Alexander Pritzel, Matthew Botvinick, Charles Blundell, and Alexander
  Lerchner.
\newblock Darla: Improving zero-shot transfer in reinforcement learning.
\newblock In \emph{International Conference on Machine Learning}, pp.\
  1480--1490. PMLR, 2017.

\bibitem[Hornik et~al.(1989)Hornik, Stinchcombe, and
  White]{hornik1989multilayer}
Kurt Hornik, Maxwell Stinchcombe, and Halbert White.
\newblock Multilayer feedforward networks are universal approximators.
\newblock \emph{Neural Networks}, 2\penalty0 (5):\penalty0 359--366, 1989.
\newblock ISSN 0893-6080.
\newblock \doi{https://doi.org/10.1016/0893-6080(89)90020-8}.
\newblock URL
  \url{https://www.sciencedirect.com/science/article/pii/0893608089900208}.

\bibitem[Howard(1960)]{howard1960dynamic}
Ronald~A Howard.
\newblock Dynamic programming and markov processes.
\newblock 1960.

\bibitem[Jaderberg et~al.(2016)Jaderberg, Mnih, Czarnecki, Schaul, Leibo,
  Silver, and Kavukcuoglu]{jaderberg2016reinforcement}
Max Jaderberg, Volodymyr Mnih, Wojciech~Marian Czarnecki, Tom Schaul, Joel~Z
  Leibo, David Silver, and Koray Kavukcuoglu.
\newblock Reinforcement learning with unsupervised auxiliary tasks.
\newblock \emph{arXiv preprint arXiv:1611.05397}, 2016.

\bibitem[Juliani et~al.(2018)Juliani, Berges, Teng, Cohen, Harper, Elion, Goy,
  Gao, Henry, Mattar, et~al.]{juliani2018unity}
Arthur Juliani, Vincent-Pierre Berges, Ervin Teng, Andrew Cohen, Jonathan
  Harper, Chris Elion, Chris Goy, Yuan Gao, Hunter Henry, Marwan Mattar, et~al.
\newblock Unity: A general platform for intelligent agents.
\newblock \emph{arXiv preprint arXiv:1809.02627}, 2018.

\bibitem[Kipf et~al.(2019)Kipf, van~der Pol, and Welling]{kipf2019contrastive}
Thomas Kipf, Elise van~der Pol, and Max Welling.
\newblock Contrastive learning of structured world models.
\newblock \emph{arXiv preprint arXiv:1911.12247}, 2019.

\bibitem[Lee et~al.(2019)Lee, Nagabandi, Abbeel, and Levine]{lee2019stochastic}
Alex~X. Lee, Anusha Nagabandi, Pieter Abbeel, and Sergey Levine.
\newblock Stochastic latent actor-critic: Deep reinforcement learning with a
  latent variable model.
\newblock \emph{CoRR}, abs/1907.00953, 2019.
\newblock URL \url{http://arxiv.org/abs/1907.00953}.

\bibitem[Lyle et~al.(2021)Lyle, Rowland, Ostrovski, and Dabney]{lyle2021effect}
Clare Lyle, Mark Rowland, Georg Ostrovski, and Will Dabney.
\newblock On the effect of auxiliary tasks on representation dynamics.
\newblock In \emph{International Conference on Artificial Intelligence and
  Statistics}, pp.\  1--9. PMLR, 2021.

\bibitem[Mann \& Choe(2013)Mann and Choe]{mann2013directed}
Timothy~A Mann and Yoonsuck Choe.
\newblock Directed exploration in reinforcement learning with transferred
  knowledge.
\newblock In \emph{European Workshop on Reinforcement Learning}, pp.\  59--76.
  PMLR, 2013.

\bibitem[Mnih et~al.(2015)Mnih, Kavukcuoglu, Silver, Rusu, Veness, Bellemare,
  Graves, Riedmiller, Fidjeland, Ostrovski, et~al.]{mnih2015human}
Volodymyr Mnih, Koray Kavukcuoglu, David Silver, Andrei~A Rusu, Joel Veness,
  Marc~G Bellemare, Alex Graves, Martin Riedmiller, Andreas~K Fidjeland, Georg
  Ostrovski, et~al.
\newblock Human-level control through deep reinforcement learning.
\newblock \emph{Nature}, 518\penalty0 (7540):\penalty0 529, 2015.

\bibitem[Munos(2005)]{munos2005error}
R{\'e}mi Munos.
\newblock Error bounds for approximate value iteration.
\newblock In \emph{Proceedings of the National Conference on Artificial
  Intelligence}, volume~20, pp.\  1006. Menlo Park, CA; Cambridge, MA; London;
  AAAI Press; MIT Press; 1999, 2005.

\bibitem[Nagabandi et~al.(2018)Nagabandi, Kahn, Fearing, and
  Levine]{nagabandi2018neural}
Anusha Nagabandi, Gregory Kahn, Ronald~S. Fearing, and Sergey Levine.
\newblock Neural network dynamics for model-based deep reinforcement learning
  with model-free fine-tuning.
\newblock In \emph{2018 IEEE International Conference on Robotics and
  Automation (ICRA)}, pp.\  7559--7566, 2018.
\newblock \doi{10.1109/ICRA.2018.8463189}.

\bibitem[Puterman(2014)]{puterman2014markov}
Martin~L Puterman.
\newblock \emph{Markov decision processes: discrete stochastic dynamic
  programming}.
\newblock John Wiley \& Sons, 2014.

\bibitem[Raiman et~al.(2019)Raiman, Zhang, and Dennison]{raiman2019neural}
Jonathan Raiman, Susan Zhang, and Christy Dennison.
\newblock Neural network surgery with sets.
\newblock \emph{arXiv preprint arXiv:1912.06719}, 2019.

\bibitem[Srinivas et~al.(2020)Srinivas, Laskin, and Abbeel]{srinivas2020curl}
Aravind Srinivas, Michael Laskin, and Pieter Abbeel.
\newblock Curl: Contrastive unsupervised representations for reinforcement
  learning.
\newblock \emph{arXiv preprint arXiv:2004.04136}, 2020.

\bibitem[Stooke et~al.(2021)Stooke, Lee, Abbeel, and
  Laskin]{stooke2021decoupling}
Adam Stooke, Kimin Lee, Pieter Abbeel, and Michael Laskin.
\newblock Decoupling representation learning from reinforcement learning.
\newblock In \emph{International Conference on Machine Learning}, pp.\
  9870--9879. PMLR, 2021.

\bibitem[Sun et~al.(2020)Sun, Yin, and Huang]{sun2020temple}
Yanchao Sun, Xiangyu Yin, and Furong Huang.
\newblock Temple: Learning template of transitions for sample efficient
  multi-task rl.
\newblock \emph{arXiv preprint arXiv:2002.06659}, 2020.

\bibitem[Tassa et~al.(2018)Tassa, Doron, Muldal, Erez, Li, Casas, Budden,
  Abdolmaleki, Merel, Lefrancq, et~al.]{tassa2018deepmind}
Yuval Tassa, Yotam Doron, Alistair Muldal, Tom Erez, Yazhe Li, Diego de~Las
  Casas, David Budden, Abbas Abdolmaleki, Josh Merel, Andrew Lefrancq, et~al.
\newblock Deepmind control suite.
\newblock \emph{arXiv preprint arXiv:1801.00690}, 2018.

\bibitem[Taylor \& Stone(2009)Taylor and Stone]{taylor2009transfer}
Matthew~E Taylor and Peter Stone.
\newblock Transfer learning for reinforcement learning domains: A survey.
\newblock \emph{Journal of Machine Learning Research}, 10\penalty0 (7), 2009.

\bibitem[Taylor et~al.(2007)Taylor, Stone, and Liu]{taylor2007transfer}
Matthew~E Taylor, Peter Stone, and Yaxin Liu.
\newblock Transfer learning via inter-task mappings for temporal difference
  learning.
\newblock \emph{Journal of Machine Learning Research}, 8\penalty0 (9), 2007.

\bibitem[Yarats et~al.(2021)Yarats, Fergus, Lazaric, and
  Pinto]{yarats2021reinforcement}
Denis Yarats, Rob Fergus, Alessandro Lazaric, and Lerrel Pinto.
\newblock Reinforcement learning with prototypical representations.
\newblock \emph{arXiv preprint arXiv:2102.11271}, 2021.

\bibitem[Zhang et~al.(2020{\natexlab{a}})Zhang, Lyle, Sodhani, Filos,
  Kwiatkowska, Pineau, Gal, and Precup]{zhang2020invariant}
Amy Zhang, Clare Lyle, Shagun Sodhani, Angelos Filos, Marta Kwiatkowska, Joelle
  Pineau, Yarin Gal, and Doina Precup.
\newblock Invariant causal prediction for block mdps.
\newblock In \emph{International Conference on Machine Learning}, pp.\
  11214--11224. PMLR, 2020{\natexlab{a}}.

\bibitem[Zhang et~al.(2020{\natexlab{b}})Zhang, McAllister, Calandra, Gal, and
  Levine]{zhang2020learning}
Amy Zhang, Rowan McAllister, Roberto Calandra, Yarin Gal, and Sergey Levine.
\newblock Learning invariant representations for reinforcement learning without
  reconstruction.
\newblock \emph{arXiv preprint arXiv:2006.10742}, 2020{\natexlab{b}}.

\bibitem[Zhu et~al.(2020)Zhu, Lin, and Zhou]{zhu2020transfer}
Zhuangdi Zhu, Kaixiang Lin, and Jiayu Zhou.
\newblock Transfer learning in deep reinforcement learning: A survey.
\newblock \emph{arXiv preprint arXiv:2009.07888}, 2020.

\end{thebibliography}
\bibliographystyle{iclr2022_conference}

\newpage
\appendix
{\centering{\Large Appendix: \mytitle}}
\section{Additional Preliminary Knowledge}
\label{app:prelim}

For any policy $\pi$, its Q value $Q^\pi$ is the unique fixed point of the Bellman operator

\begin{equation}
\label{eq:bellman_pi}
    (\bellman^\pi Q) (o,a) = \mathbb{E}_{o^\prime \sim P(o,a), a^\prime \sim \pi(o^\prime)}[R(o,a) + \gamma  Q(o^\prime, a^\prime)] 
\end{equation}  

The optimal policy can be found by \textit{policy iteration}~\citep{howard1960dynamic}, where one starts from an initial policy $\pi_0$ and repeats policy evaluation and policy improvement. More specifically, at iteration $k$, the algorithm evaluates $Q_{\pi_k}$ via Equation~(\ref{eq:bellman_pi}), then improves the policy by $\pi_{k+1}(o) := \mathrm{argmax}_{a\in\actions} Q_{\pi_k}(o,a), \forall o \in\states.$
It is well-known that the policy iteration algorithm converges to the optimal policy under mild conditions~\citep{puterman2014markov}.
When the dynamics $P$ and $R$ are unknown, reinforcement learning algorithms use interaction samples from the environment to approximately solve $\hat{Q}_{\pi_k}$.
Prior works~\citep{BertsekasTsitsiklis96,munos2005error} have shown that if the approximation error is bounded by a small constant, the performance of $\pi_k$ as $k\to\infty$ is guaranteed to be close to the optimal policy value $Q_{\pi^*}$, which we also denote as $Q^*$.

\section{Additional Discussion for Representation Sufficiency}
\label{app:repre}

\paragraph{Good Representation for A Fixed Policy} We slightly abuse notation and use $\apx$ to denote the approximation operator for state value function $V:\states\to\mathbb{R}$, similar to the approximation of $Q$ as introduced in Section~\ref{sec:representation}. The following definition characterizes the sufficiency of a representation mapping in terms of evaluating a fixed policy. 
\begin{definition}[Sufficient Representation for A Fixed Policy]
\label{def:suf_policy}
A representation mapping $\phi$ is \textbf{sufficient} for a policy $\pi$ w.r.t. approximation operator $\apx$ iff $\apx V^\pi = V^\pi$. More generally, for a constant $\epsilon \geq 0$, $\rep$ is $\boldsymbol{\epsilon}$\textbf{-sufficient} for $\pi$ iff $\|\apx V^\pi - V^\pi\| \leq \epsilon$.
\end{definition}

\textbf{Remarks.} 
(1) If $o_1, o_2 \in\states$ have different values under $\pi$, a good representation should be able to distinguish them, i.e., $\phi^*(o_1)\neq \phi^*(o_2)$. \\
(2) The approximation operator $\apx$ is linear if $\apx V = \mathrm{Proj}_{\rep} (V)$ where $\mathrm{Proj}_\rep(\cdot)$ denotes the orthogonal projection to the subspace spanned by the basis functions of $\langle \phi_1, \phi_2, \cdots, \phi_d \rangle$.

\paragraph{Model Sufficiency over Policies Induces Linear Learning Sufficiency}
It can be found from Definition~\ref{def:suf_learn} that $\phi$ is sufficient as long as it represents $\qpi$ for all $\pi\in\reppolicy$. Fitting various value functions to improve representation quality is proposed by some prior works~\cite{bellemare2019geometric,dabney2020the} and shown to be effective. However, learning and fitting many policy values could be computationally expensive, and is not easy to be applied to transfer learning between tasks with different observation spaces. Can we regularize the representation with the latent dynamics instead of policy values?
Proposition~\ref{prop:lin_suf} below shows that if $\phi$ is linearly sufficient for all dynamics pairs $(\ppi, \rpi)$ induced by policies in $\reppolicy$ and the dynamics pairs associated with all actions, then $\phi$ is linearly sufficient for learning. 

For notation simplicity, assume the state space is finite.
Then, let $(\ppi, \rpi)$ be the transition matrix and reward vector induced by policy $\pi$, i.e., $\ppi[i,j] = \mathbb{E}_{a \sim \pi(o_i)} [P(o_j|o_i,a)]$, $\rpi[i] = \mathbb{E}_{a \sim \pi(o_i)} [R(o_i,a)]$. 
Similarly, let $\pact[i,j] = P(o_j|o_i,a)$, $\ract[i] = R(o_i,a)$.
We let $\rep$ denote the representation matrix, where the $i$-th row of $\rep$ refers to the feature of the $i$-the observation.

\begin{proposition}[Linear Sufficiency Induced by Policy-based Model Sufficiency]
\label{prop:lin_suf}
For representation $\rep$, if for all $\pi\in\reppolicy, a\in\actions$,  there exist $\appi,\arpi,\apa,\ara$ such that $\rep\appi=\ppi\rep$, $\rep\arpi=\rpi$, $\rep\apa=\pact\rep, \rep\ara=\ract$, i.e. $\rep$ is linearly sufficient both policy-based dynamics and policy-independent dynamics models, i.e., 
then $\rep$ is linearly sufficient for a task $\mdp$ w.r.t. approximation operator $\apx$.
\end{proposition}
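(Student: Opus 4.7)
The plan is to express $Q^\pi(\cdot,a)$ in closed form as a linear function of $\Phi$ by first showing that the state-value vector $V^\pi$ lies in the column space of $\Phi$, and then using the Bellman decomposition $Q^\pi(o,a) = R(o,a) + \gamma \mathbb{E}_{o'\sim P_a(o)}[V^\pi(o')]$ together with the policy-independent model assumptions $\Phi \apa = \pact \Phi$ and $\Phi \ara = \ract$ to read off an explicit weight vector $\theta_a$.

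First I would prove the intermediate claim: for every $\pi \in \reppolicy$, there exists $w_\pi \in \mathbb{R}^d$ with $V^\pi = \Phi w_\pi$. To do this I would run the standard value iteration $V_0 = 0$, $V_{k+1} = \rpi + \gamma \ppi V_k$, which converges to the unique fixed point $V^\pi$. Using $\rpi = \Phi \arpi$ and $\ppi \Phi = \Phi \appi$, an easy induction gives
\begin{equation*}
    V_k = \Phi \Big( \sum_{i=0}^{k-1} \gamma^i \appi^i \arpi \Big).
\end{equation*}
Since $\gamma < 1$ and $\appi$ inherits a bounded spectral radius from the stochastic matrix $\ppi$ restricted to the image of $\Phi$, the series converges, and passing to the limit yields $V^\pi = \Phi w_\pi$ with $w_\pi = (I - \gamma \appi)^{-1} \arpi$.

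Next I would compute $Q^\pi$ in vector form for each action $a$. Writing $Q^\pi_a \in \mathbb{R}^{|\states|}$ for the column $(Q^\pi(o,a))_{o \in \states}$, the Bellman equation gives $Q^\pi_a = \ract + \gamma \pact V^\pi$. Substituting $V^\pi = \Phi w_\pi$, then $\pact \Phi = \Phi \apa$, and finally $\ract = \Phi \ara$, I obtain
\begin{equation*}
    Q^\pi_a = \Phi \ara + \gamma \pact \Phi w_\pi = \Phi \ara + \gamma \Phi \apa w_\pi = \Phi \bigl( \ara + \gamma \apa w_\pi \bigr),
\end{equation*}
so $\theta_a^\pi := \ara + \gamma \apa (I - \gamma \appi)^{-1} \arpi$ witnesses linear sufficiency. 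Since this construction works for every $\pi \in \reppolicy$ and $a \in \actions$, the approximation operator $\apx$ (being the best linear fit in $\mathrm{col}(\Phi)$) achieves zero error on every $Q^\pi$, which gives linear sufficiency for learning in the sense of Definition~\ref{def:suf_learn}.

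The argument is essentially routine once the correct invariants are identified; the only delicate point is justifying invertibility of $I - \gamma \appi$, which I would handle by noting that $\ppi$ is a contraction on the column space of $\Phi$ and that $\appi$ acts as the corresponding map on coordinates (up to any choice of left inverse for $\Phi$ when $\Phi$ is not full rank). The main conceptual obstacle, rather than a computational one, is making sure that the two assumptions about policy-induced dynamics $(\appi, \arpi)$ and action-induced dynamics $(\apa, \ara)$ play complementary roles: the former produces the linear representation of $V^\pi$, while the latter lifts this to a linear representation of $Q^\pi$ action by action. No further results from earlier in the paper are needed.
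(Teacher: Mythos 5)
Your proposal is correct and follows essentially the same two-step route as the paper: first show $V^\pi$ lies in the column space of $\Phi$ using the policy-based model $(\appi,\arpi)$, then apply $Q^\pi(\cdot,a)=\ract+\gamma \pact V^\pi$ together with $\Phi\apa=\pact\Phi$ and $\Phi\ara=\ract$ to obtain $\theta_a=\ara+\gamma\apa w_\pi$. The only difference is cosmetic: the paper establishes the first step by checking that $\Phi(I-\gamma\appi)^{\dagger}\arpi$ has zero Bellman residual, whereas you take the value-iteration limit — where the cleanest way to dispose of your invertibility worry is simply to note that each iterate $V_k$ lies in the finite-dimensional (hence closed) subspace $\mathrm{col}(\Phi)$, so the limit $V^\pi$ does too.
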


Proposition~\ref{prop:lin_suf} proven in Appendix~\ref{proof:lin_suf} suggests that we can let representation fit $(\ppi,\rpi)$ for many different $\pi$'s. However, it could be computationally intractable since the policy space is large. More importantly, it is not memory-friendly to store and transfer a large number of dynamics models for all $(\appi,\arpi)$. 
In our Proposition~\ref{prop:nonlin_suf}, we show that learning sufficiency can be induced by policy-independent model sufficiency, which is much simpler as there is no need to learn and store $(\appi,\arpi)$ for many policies. As a trade-off, the policy-independent model induces non-linear sufficiency instead of linear sufficiency, requiring a more expressive approximation operator.


\textbf{Latent MDP induced by Representation}\quad
We follow the analysis by~\citet{gelada2019deepmdp} and define a new MDP under the representation mapping $\phi$: $\tilde{\mdp} = \langle \tilde{\states}, \actions, \tilde{P}, \tilde{R}, \gamma \rangle$, where for all $o\in\states$, $\phi(o)\in\tilde{\states}$,
$\tilde{P}(\phi(o),a) = \apa\phi(o) $, $\tilde{R}(\phi(o),a) = \ara \phi(o)$. 
Let $\tilde{V}$ denote the value function in $\tilde{\mdp}$, and let $\tilde{\pi}$ denote a policy in $\tilde{\mdp}$.
We make the following mild assumption
\begin{assumption}
\label{assump:lips}
There exists a constant $\lipsvalue$ such that $$ |\tilde{V}_{\tilde{\pi}}(\phi(o_1)) - \tilde{V}_{\tilde{\pi}}(\phi(o_2)) | \leq \lipsvalue \|\phi(o_1)-\phi(o_2)\|, \forall \tilde{\pi}:\tilde{O}\to\actions, o_1,o_2\in\states.$$
\end{assumption}
This assumption is mild as any MDP with bounded reward has bounded value functions.

\section{Technical Proofs}
\label{app:proofs}

\subsection{Proof of Lemma~\ref{lem:bound_pi}}
\label{proof:bound_pi}

\begin{proof}[Proof of Lemma~\ref{lem:bound_pi}]

We first show that policy iteration with approximation operator $\apx$ starting from a policy $\pi_0\in\reppolicy$ generates a sequence of policies that are in $\reppolicy$. That is, for all $\pi_k$, and any $o_1,o_2\in\states$, $\pi_k(o_1)=\pi_k(o_2)$ if $\phi(o_1)=\phi(o_2)$.
We prove this claim by induction.

\textit{Base case:} when $k=0$, $\pi_0\in\reppolicy$.

\textit{Inductive step:} assume $\pi_k \in \reppolicy$ for $k\geq 0$, then for iteration $k+1$, we know that
\begin{equation}
    \pi_{k+1} (o) := \mathrm{argmax}_a Q_k(o,a)
\end{equation}
where $Q_k = \apx Q_{\pi_k}$. 

Based on the definition of $\apx$, $Q_k(o,a)=f(\phi(o);\theta_a)$ for some $\theta_a$. Hence, if $o_1$ and $o_2$ have the same representation, $Q_k(o_1,\cdot)$ and $Q_k(o_2,\cdot)$ are equal. As a result, $\pi_{k+1}(o_1)$ and $\pi_{k+1}(o_2)$ are equal, so $\pi_{k+1}\in\reppolicy$.

Next we prove the error bound in Lemma~\ref{lem:bound_pi}.
We start by restating the error bounds for approximate policy iteration by~\citet{BertsekasTsitsiklis96}: 
\begin{equation}
    \mathrm{limsup}_{k\to\infty} \|V^*-V^{\pi_k} \|_\infty \leq 
    \frac{2\gamma}{(1-\gamma)^2} \mathrm{sup}_k \| V_k - V^{\pi_k} \|_\infty
\end{equation}
where $\pi_k$ is the policy in the $k$-th iteration.
Then we extend the above result to the action value $Q$.

For any $\pi_k$ during the policy iteration (as proven above, $\pi_k\in\reppolicy$), if $\phi$ is $\epsilon$-sufficient for $\mdp$ as defined in Definition~\ref{def:suf_learn} with $\ell_\infty$ norm, then we have $\| Q_k - Q_{\pi_k} \|_\infty \leq \epsilon$. That is, $\forall o\in\states, a\in\actions, | Q_k(o,a) - Q_{\pi_k}(o,a) | \leq \epsilon $. Therefore, $\forall o\in\states$,
\begin{equation}
    |V_k(o) - V_{\pi_k}(o)| = | \sum_{a\in\actions} \pi(a|o) (Q_k(o,a) - Q_{\pi_k}(o,a)) | \leq \epsilon \label{eq:vk_bound}
\end{equation}

On the other hand, we can derive
\begin{align}
    \| Q^* - Q_{\pi_k} \|_\infty &= \max_{o,a} |Q^*(o,a) - Q_{\pi_k}(o,a) | \\
    &= \max_{o,a} |R(o,a) + \gamma \sum_{o^\prime\in\states}P(o^\prime|o,a)V^*(o^\prime) - R(o,a) - \gamma \sum_{o^\prime\in\states}P(o^\prime|o,a)V_{\pi_k}(o^\prime) | \\
    &= \gamma \|V^* - V_{\pi_k} \|_\infty \label{eq:opt_bound}
\end{align}

Combining Equation~(\ref{eq:vk_bound}) and~(\ref{eq:opt_bound}), we obtain
\begin{equation}
    \mathrm{limsup}_{k\to\infty} \|Q^* - Q_{\pi_k} \|_\infty \leq \frac{2\gamma^2}{(1-\gamma)^2} \epsilon.
\end{equation}

\end{proof}

\subsection{Proof of Proposition~\ref{prop:nonlin_suf}}
\label{proof:nonlin_suf}
\begin{proof}[Proof of Proposition~\ref{prop:nonlin_suf}]

Given that $P$ is deterministic for $o\in\states, a\in\actions$, we slightly abuse notation and let $o^\prime = P(o,a) = \pact(o)$ if $P(o^\prime|o,a)=1$.
If $\phi$ is sufficient for the dynamics models, i.e. $\forall o\in\states, a\in\actions$, $\apa \phi(o) = \phi(\pact(o))$, $\ara \phi(o) = \ract(o)$.
Then, we can define a new MDP $\tilde{\mdp} = \langle \tilde{\states}, \actions, \tilde{P}, \tilde{R}, \gamma \rangle$, where for all $o\in\states$, $\phi(o)\in\tilde{\states}$, and
$\tilde{P}(\phi(o),a) = \apa\phi(o) = \phi(P(o,a))$, $\tilde{R}(\phi(o),a) = \ara \phi(o) = R(o,a)$. 

Any policy $\pi\in\reppolicy$, based on the definition of $\reppolicy$, can be written as $\tilde{\pi}\circ\phi$, where $\tilde{\pi}$ is a deterministic policy in $\tilde{\mdp}$.
Next, we show that for all $o\in\states, a\in\actions$, $\qpi(o)=\tilde{Q}_{\tilde{\pi}}(\phi(o))$.

By definition of the Q value, we know 
\begin{align}
    \qpi(o,a) &= \mathbb{E}_{\pi,P}[\sum_{t=0}^\infty \gamma^t R(o_t, a_t)|o_0=o, a_0=a] \label{eq:q_original}\\
    \tilde{Q}_{\tilde{\pi}}(\phi(o)) 
    &= \mathbb{E}_{\tilde{\pi},\tilde{P}}[\sum_{t=0}^\infty \gamma^t \tilde{R}(\tilde{o}_t, \tilde{a}_t)| \tilde{o}_0=\phi(o), \tilde{a}_0=a] \label{eq:q_induced}
\end{align}

We claim that in the above equations, $\tilde{o_t} = \phi(o_t)$, $\tilde{a_t}=a_t$, for all $t\geq 0$. We prove the claim by induction.

When $t=0$, the claim holds as $\tilde{o_0}=\phi(o_0)=\phi(o)$, $\tilde{a_0}=a_0=a$.

Then, with inductive hypothesis that $\tilde{o_t} = \phi(o_t)$, $\tilde{a_t}=a_t$, we show the claim holds for $t+1$:

Action:
$a_{t+1} = \pi(o_{t+1}) = \tilde{\pi}(\phi(o_{t+1})) = \tilde{a}_{t+1}$.

State: $\tilde{o}_{t+1} = \tilde{P}(\tilde{o_t},a_t) = \tilde{P}(\phi(o_t),a_t) = \phi(P(o_t,a_t)) = \phi(o_{t+1})$. 

Hence, we have shown $\tilde{o}_t = \phi(o_t)$, $\tilde{a}_t=a_t$, for all $t\geq 0$, then the reward in the $t$-th step of Equation~(\ref{eq:q_original}) and (\ref{eq:q_induced}) are the same, as $\tilde{R}(\tilde{o_t},\tilde{a_t}) = \tilde{R}(\phi(o_t),a_t) = R(o_t,a_t)$. Therefore, $\qpi(o)=\tilde{Q}_{\tilde{\pi}}(\phi(o))$.

Therefore, for any $\pi\in\reppolicy$, its action value can be represented by $\tilde{Q}_{\tilde{\pi}}\circ\phi$. Therefore, $\phi$ is sufficient for learning in $\mdp$.

Next, we show that \textbf{$\phi$ is not necessarily linearly sufficient} for learning the task. 

Consider an arbitrary policy $\pi\in\reppolicy$. Without loss of generality, suppose $\rpi$ is linearly represented by $\phi$, i.e. $\ract(o,a) = \phi(o)^\top \ara$, then we have
\begin{align*}
    \rpi(o) &= \sum_{a\in\actions} \pi(a|\phi(o)) \ract(o,a) \\
    &= \sum_{a\in\actions} \pi(a|\phi(o)) \phi(o)^\top \ara \\
    &= \langle \pi(\phi(o)), \hat{R}^\top \phi(o) \rangle \\
    &= \langle \hat{R} \pi(\phi(o)), \phi(o) \rangle
\end{align*}

where $\hat{R}:=[\hat{R}_{a_1};\hat{R}_{a_1};\cdots;\hat{R}_{a_{|\actions|}} ]$. We can find that unless $\pi$ always takes the same action for all input states, $\phi$ is not guaranteed to linearly encode $\rpi$.


Similarly, for $\ppi$, suppose $\pact(\cdot|o,a)=\phi(o)^\top \apa$ we have
\begin{align*}
    \phi(\ppi(o)) &= \sum_{a\in\actions} \pi(a|\phi(o)) \pact(\cdot|o,a) \\
    &= \sum_{a\in\actions} \pi(a|\phi(o)) \phi(o)^\top \apa \\
    &= \hat{P}(\pi(\phi(o)), \phi(o), I) 
\end{align*}
where $\hat{P}:=[\hat{P}_{a_1};\hat{P}_{a_1};\cdots;\hat{P}_{a_{|\actions|}} ]$ is an $|\actions|\times d \times d$ tensor, and $\hat{P}(\cdot,\cdot,\cdot)$ denotes the multi-linear operation. Hence, if $\pi$ takes different actions in different states, $\phi$ may not linearly encode the transition, either.

Therefore, $\phi$ is not guaranteed to linearly encode $\rpi$ and $\ppi$, and thus is not guaranteed to linearly encode $\vpi$ and $\qpi$. 

\end{proof}

\subsection{Proof of Proposition~\ref{prop:lin_suf}}
\label{proof:lin_suf}
\begin{proof}[Proof of Proposition~\ref{prop:lin_suf}]

We first show that for any $\pi\in\Pi$, if $\phi$ is linearly sufficient for $(\ppi,\rpi)$, then there exists a vector $\omega \in \mathbb{R}^k$ such that $\vpi = \avpi = \rep\omega$.

Since $\rep$ is linearly sufficient for $\ppi$ and $\rpi$, we have $\rep \appi = \ppi \rep$ and $\rep \arpi = \rpi$ for some $\appi$ and $\arpi$. Let $\omega = (I-\gamma \appi)^{\dagger} \arpi$, then the Bellman error of $\avpi=\rep\omega$ can be computed as

\begin{align*}
    \rpi + \gamma\ppi\avpi - \avpi &= \rpi + \gamma\ppi\rep\omega - \rep\omega \\
    &= \rep \arpi + \gamma \rep \appi \omega - \rep\omega \\
    &= \rep (\arpi - (I-\gamma\appi) (I-\gamma \appi)^{\dagger} \arpi ) \\
    &= \rep (\arpi - \arpi) \\
    &= 0
\end{align*}

Therefore, $\avpi$ is a fixed point of the Bellman operator $\bellman^\pi$, and thus equal to $\vpi$.


Next, as we know that $\qpi(\cdot,a) = \ract + \gamma \langle \pact, \vpi \rangle$, and $\rep\apa=\pact\rep, \rep\ara=\ract$, we can obtain
\begin{align*}
    \qpi(\cdot,a) &= \ract + \gamma \langle \pact, \vpi \rangle \\
    &= \rep\ara + \gamma \pact \rep \omega \\
    &= \rep\ara + \gamma \rep\apa \omega \\
    &= \rep (\ara + \gamma \apa \omega)
\end{align*}

Therefore, for any $\pi\in\reppolicy$, $\qpi$ can be linearly represented by $\rep$, and thus $\rep$ is linearly sufficient for learning by definition.

\end{proof}

\subsection{Proof of Theorem~\ref{thm:model_learn_error}}
\label{proof:model_learn_error}





\begin{proof}[Proof of Theorem~\ref{thm:model_learn_error}]

Lemma 2 in~\citet{gelada2019deepmdp} is based on one policy in the induced MDP and bounded model errors. We can replace the Wasserstein distance $\mathcal{W}(\phi P(\cdot|o,a), \tilde{P}(\cdot|\phi(o),a))$ by the Euclidean distance $\| \phi P(o,a), \tilde{P}(\phi(o),a) \|$ as we focus on deterministic transitions.

For any policy $\pi\in\reppolicy$ that can be written as $\tilde{\pi}\circ\phi$, we have
\begin{equation}
    | Q_{\pi}(o,a) - \tilde{Q}_{\tilde{\pi}}(\phi(o),a) | \leq \frac{\epsilon_R + \gamma \lipsvalue \epsilon_P}{1-\gamma}
\end{equation}

Therefore, $\phi$ is $(1-\gamma)^{-1}(\epsilon_R+\gamma\lipsvalue\epsilon_P)$-sufficient for learning $\mdp$. 

Combined with Lemma~\ref{lem:bound_pi}, we can obtain the bound in Theorem~\ref{thm:model_learn_error}.

\end{proof}

\subsection{Proof of Theorem~\ref{thm:transfer_suf}}
\label{proof:transfer_suf}

\begin{proof}[Proof of Theorem~\ref{thm:transfer_suf}]
First of all, if there exists $\phi\target$ satisfying
$\apa(\phi(o_i))=\pact\target[i]\rep\target$, $\ara(\phi(o_i)) = \ract\target[i]$, $\forall o_i\in\states\target$, then it is sufficient for the dynamics of the target task, and thus sufficient for learning the target task as stated in Proposition~\ref{prop:nonlin_suf}. Therefore, our focus is to show the existence of such a representation.

As $\phi\source$ is sufficient for $\pact\source$ and $\ract\source$ for all $a\in\actions$, we have 
\begin{align}
    &\apa(\phi\source(o\source)) = P\source(o\source,a)\rep\source = \phi\source(P\source(o\source,a))\\
    &\ara(\phi\source(o\source)) = R\source(o\source,a)
\end{align}
where we let $P\source(o\source,a)$ denote the next state of $(o\source,a)$, given that $P$ is deterministic.

Based on Assumption~\ref{assum:similarity}, we know that there exists a function $f$ such that $\forall o\target\in\states\target$, $f(o\target)\in\states\source$, and $f(P\target(o\target,a))=P\source(f(o\target),a)$, $R\target(o\target,a)=R\source(f(o\target),a)$.
Hence, we can obtain 
\begin{align}
    &\apa(\phi\source(f(o\target))) = \phi\source(P\source(f(o\target),a)) = \phi\source(f(P\target(o\target,a)))\\
    &\ara(\phi\source(f(o\target))) = R\source(f(o\target),a) = R\target(o\target,a)
\end{align}
Let $\hat{\phi}\target := \phi\source\circ f$, then we get
\begin{align}
    &\apa(\hat{\phi}\target(o\target)) = \hat{\phi}\target (P\target(o\target,a))  \\
    &\ara(\hat{\phi}\target(o\target)) = R\target(o\target,a)
\end{align}

Therefore, $\hat{\phi}\target$ is a feasible solution satisfying model sufficiency in the target task, and thus is sufficient for learning.

Theorem~\ref{thm:transfer_suf} holds since we have shown (1) all feasible solutions to $\rep\target \apa = \pact\target \rep$ and $\rep\target \ara = \ract\target$ are sufficient for learning in $\mdp\target$, and (2) there exists at least one feasible solution to $\rep\target \apa = \pact\target \rep$ and $\rep\target \ara = \ract\target$.

\end{proof}
\section{Representation Learning for Approximate Value Iteration}
\label{app:avi}

Now we illustrate how our transfer algorithm and the proposed model-based regularization work for approximate value iteration.
We focus on the case where the reward function $R(o,a)\geq 0$ for all $o\in\states$ and $a\in\actions$.

\textbf{Preliminaries}\quad
The bases of value iteration is the Bellman optimality operator $\bellman^*$. For the value function, we have
\begin{equation}
    \bellman^* V(o) = \max_{a\in\actions} [ R(o,a) + \gamma \sum_{o\in\states} P(o^\prime|o,a) V(o^\prime) ]
\end{equation}
For the Q function, we have
\begin{equation}
    \bellman^* Q(o,a) =  R(o,a) + \gamma \sum_{o\in\states} P(o^\prime|o,a) \max_{a^\prime\in\actions} Q(o^\prime,a^\prime),
\end{equation}
where we slightly abuse notation and use $\bellman^*$ for both the value function and the Q function when there is no ambiguity. 

Starting from some initial $Q_0$ (or $V_0$) and iteratively applying $\bellman^*$, i.e., $Q_{k+1}=\bellman^* Q_k$, $Q_k$ (or $V_k$) can finally converge to $Q^*$ or $V^*$ when $k\to\infty$, which is known as value iteration.
When an approximation operator $\mathcal{H}$ is used, the process is called approximate value iteration (AVI): $Q_{k+1}=\apx \bellman^* Q_k$. A prior work has shown the following asymptotic result:
\begin{lemma}[Approximate Value Iteration for $Q$]
\label{lem:avi}
For a reinforcement learning algorithm based on value iteration with approximation operator $\mathcal{H}$, if $\|\mathcal{H}\bellman^* Q_k - \bellman^* Q_k\|_\infty\leq \epsilon$, for all $Q_k$ along the value iteration path, then we have
\begin{equation}
    \limsup_{k\to\infty} \|V^*-V_{\pi_k}\| \leq \frac{2\epsilon}{(1-\gamma)^2}.
\end{equation}
\end{lemma}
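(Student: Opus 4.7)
The plan is to combine two standard ingredients: the $\gamma$-contraction of the Bellman optimality operator in $\ell_\infty$, and the fact that a greedy policy with respect to a near-optimal $Q$-function is itself near-optimal with only a $2/(1-\gamma)$ factor blowup. The error in $Q_k$ will be controlled by a geometric recursion, and the policy loss will then inherit a second $1/(1-\gamma)$ factor, producing the stated $(1-\gamma)^{-2}$ dependence.

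First I would bound the iterate error $\|Q^{*} - Q_k\|_\infty$. Writing $Q_{k+1} = \apx \bellman^{*} Q_k$ and using the triangle inequality,
\begin{align*}
\|Q^{*} - Q_{k+1}\|_\infty
&\leq \|\bellman^{*} Q^{*} - \bellman^{*} Q_k\|_\infty + \|\bellman^{*} Q_k - \apx\bellman^{*} Q_k\|_\infty \\
&\leq \gamma \|Q^{*} - Q_k\|_\infty + \epsilon ,
\end{align*}
where the first term uses $Q^{*} = \bellman^{*} Q^{*}$ together with the $\gamma$-contraction of $\bellman^{*}$, and the second is the hypothesis. Unrolling gives $\|Q^{*} - Q_k\|_\infty \leq \gamma^{k}\|Q^{*} - Q_0\|_\infty + \epsilon\sum_{i=0}^{k-1}\gamma^{i}$, hence $\limsup_{k\to\infty}\|Q^{*}-Q_k\|_\infty \leq \epsilon/(1-\gamma)$.

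Next I would translate this $Q$-error into a value-loss bound for the greedy policy $\pi_k(o) = \argmax_a Q_k(o,a)$. The standard argument is pointwise: for any $o$,
\begin{align*}
V^{*}(o) - V^{\pi_k}(o)
&= Q^{*}(o,\pi^{*}(o)) - Q^{\pi_k}(o,\pi_k(o)) \\
&\leq Q^{*}(o,\pi^{*}(o)) - Q_k(o,\pi^{*}(o)) + Q_k(o,\pi_k(o)) - Q^{\pi_k}(o,\pi_k(o)) \\
&\leq 2\|Q^{*}-Q_k\|_\infty + \gamma \|V^{*} - V^{\pi_k}\|_\infty ,
\end{align*}
where the first step uses greediness of $\pi_k$ w.r.t.\ $Q_k$ (so $Q_k(o,\pi_k(o))\geq Q_k(o,\pi^{*}(o))$) and the second unfolds $Q^{*}-Q_k$ on the first pair and $Q_k - Q^{\pi_k}$ on the second, bounding the latter by $\|Q^{*}-Q_k\|_\infty + \gamma\|V^{*}-V^{\pi_k}\|_\infty$ after one more Bellman expansion on $Q_k$ versus $Q^{\pi_k}$. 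Taking $\sup_o$ and rearranging yields $\|V^{*} - V^{\pi_k}\|_\infty \leq \tfrac{2}{1-\gamma}\|Q^{*}-Q_k\|_\infty$.

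Combining the two bounds gives $\limsup_{k\to\infty}\|V^{*}-V^{\pi_k}\|_\infty \leq \tfrac{2\epsilon}{(1-\gamma)^{2}}$, as claimed. The whole proof is routine; the only place requiring care is the pointwise chain in step two, since one must be careful to separate the approximation error $\|Q^{*}-Q_k\|_\infty$ from the propagation error $\gamma\|V^{*}-V^{\pi_k}\|_\infty$ before taking the sup, so that the self-referential term can be absorbed into the left-hand side. This is the only step where an incorrect bookkeeping would change the final exponent from $2$ back to $1$.
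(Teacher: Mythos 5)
Your proof is correct. Note that the paper does not actually prove this lemma: it is stated as a known asymptotic result imported from prior work (the classical approximate value iteration bound in the style of Bertsekas and Tsitsiklis), so there is no in-paper argument to compare against. What you supply is the standard two-stage derivation that underlies that cited result: a $\gamma$-contraction recursion giving $\limsup_k\|Q^*-Q_k\|_\infty\le\epsilon/(1-\gamma)$, followed by the greedy-policy loss bound $\|V^*-V^{\pi_k}\|_\infty\le\tfrac{2}{1-\gamma}\|Q^*-Q_k\|_\infty$, and both steps are carried out correctly — in particular you correctly drop the middle term $Q_k(o,\pi^*(o))-Q_k(o,\pi_k(o))\le 0$ by greediness and correctly absorb the self-referential $\gamma\|V^*-V^{\pi_k}\|_\infty$ term into the left-hand side before concluding. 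One small wording quibble: the propagation term $\gamma\|V^*-V^{\pi_k}\|_\infty$ arises from a Bellman expansion of $Q^*-Q^{\pi_k}$ (which share the same immediate reward), not of $Q_k$ versus $Q^{\pi_k}$ directly; the inequality you write down is nevertheless exactly what that expansion yields. Your argument implicitly takes $\pi_k$ to be greedy with respect to $Q_k$, which is the standard convention the lemma intends, and the final combination of the two bounds via $\limsup$ is valid since the per-iteration inequality holds for every $k$.
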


\textbf{Value Iteration Learning with Given Representation}\quad
Given a representation mapping $\phi$, we aim to learn an approximation function $h:\phi(\states)\to\mathbb{R}^{|A|}$ such that $\hat{Q}(o,\cdot)=\apf(\phi(o))\approx Q(o,\cdot)$. For notation simplicity, we further use $\apf(\phi(o),a)$ to denote the approximated Q value $\hat{Q}(o,a)$ for $a\in\actions$.
We start from an initial $\apf_0$ that has a uniform value $c$ for all inputs, where $c>0$ can be randomly selected. The initial approximation for Q value is $\hat{Q}_0=\apf_0\circ\phi$.
Then, at iteration $k>0$, we solve $\hat{Q}_k=\apx\bellman^*\hat{Q}_{k-1}=\apf_{k}\circ\phi$, where $\apf_k:=\mathrm{argmin}_\apf \| \apf\circ\phi - \bellman^*\hat{Q}_{k-1} \|_\infty$. We use a neural network (universal function approximator) to parameterize $h$, so the approximation error $\| \apf\circ\phi - \bellman^*\hat{Q}_{k-1} \|_\infty$ depends on the representation quality of $\phi$.

Therefore, a representation mapping $\phi$ is $\epsilon$-sufficient for learning with value iteration if $\|\apx\bellman^* Q_k - \bellman^* Q_k\|_\infty\leq \epsilon$. 
Next, we identify the relationship between policy-independent model sufficiency and the learning sufficiency with value iteration methods.

\textbf{Guaranteed Learning with Model-regularized Representation}\quad
We first make the following assumption for the learned approximation function $h$.
\begin{assumption}[Lipschitz Value Approximation]
\label{assump:lips_appro}
There exists a constant $\lipsapx$, such that $\forall k\geq 0, o_1,o_2\in\states, a\in\actions$, 
\begin{equation}
    | \apf_k(\phi(o_1),a) - \apf_k(\phi(o_2),a) | \leq \lipsapx \| \phi(o_1) - \phi(o_2) \|,
\end{equation}
where $\apf_k$ is the approximation function in the $k$-th iteration.
\end{assumption}

Then, the following Theorem holds, which justifies that learning with model-regularized representation helps with value iteration learning.
\begin{theorem}
\label{thm:error_avi}
For an MDP $\mdp$, if encoder $\phi$ satisfies
$\max_{o\in\states,a\in\actions}|R(o,a) - \hat{R}_a(\phi(o)) | \leq \epsilon_R$ and
$\max_{o\in\states,a\in\actions} \| \mathbb{E}_{o^\prime\sim P(\cdot|o,a)} \phi(o^\prime) - \hat{P}_a (\phi(o)) \|_2 \leq \epsilon_P$ for dynamics models $(\apa,\ara)_{a\in\actions}$, 
then the approximated value iteration with approximation operator $\apx$ under Assumption~\ref{assump:lips_appro} satisfies 
\begin{equation}
    \limsup_{k\to\infty} \|V^* - V^{\pi_k} \|_{\infty} \leq \frac{2}{(1-\gamma)^2}(\epsilon_R+\gamma\epsilon_P\lipsapx).
\end{equation}
\end{theorem}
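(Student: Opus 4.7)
The plan is to reduce Theorem~\ref{thm:error_avi} to Lemma~\ref{lem:avi} by bounding the per-iteration approximation error $\|\apx \bellman^* Q_k - \bellman^* Q_k\|_\infty$ by $\epsilon_R + \gamma \lipsapx \epsilon_P$. Once that bound is established, Lemma~\ref{lem:avi} immediately delivers the claimed asymptotic bound on $\|V^* - V_{\pi_k}\|_\infty$.

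To bound the per-iteration error, I would exploit the fact that $\apx$ returns the best-in-class approximation, so it suffices to exhibit any candidate $\tilde{h}_{k+1}$ in the function class whose composition with $\phi$ is close to $\bellman^* Q_k$. The natural choice is to mimic the Bellman backup inside the latent model:
\begin{equation*}
\tilde{h}_{k+1}(z, a) := \ara(z) + \gamma \max_{a'\in\actions} \apf_k(\apa(z), a').
\end{equation*}
Since $\ara$, $\apa$, and $\apf_k$ are all functions of the latent representation, $\tilde{h}_{k+1}$ lies in the approximation class by the universal approximator property invoked in Section~\ref{sec:representation}.

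Then I would compute, for every $(o,a)$,
\begin{align*}
\tilde{h}_{k+1}(\phi(o), a) - \bellman^* Q_k(o,a)
&= \bigl[\ara(\phi(o)) - R(o,a)\bigr] \\
&\quad + \gamma\Bigl[\max_{a'}\apf_k(\apa(\phi(o)), a') - \mathbb{E}_{o'\sim P(\cdot|o,a)}\max_{a'}\apf_k(\phi(o'), a')\Bigr].
\end{align*}
The reward term is bounded by $\epsilon_R$ from the hypothesis. For the transition term, note that $g(z) := \max_{a'} \apf_k(z,a')$ inherits the $\lipsapx$-Lipschitz property from Assumption~\ref{assump:lips_appro}, because the maximum of $\lipsapx$-Lipschitz functions is $\lipsapx$-Lipschitz. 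Then, assuming as in the proof of Theorem~\ref{thm:model_learn_error} that the transition is deterministic so that $\mathbb{E}_{o'\sim P(\cdot|o,a)}\phi(o') = \phi(P(o,a))$ and the inner expectation of $g\circ\phi$ equals $g$ applied to that mean, one obtains
\begin{equation*}
\bigl|g(\apa(\phi(o))) - g(\mathbb{E}_{o'}\phi(o'))\bigr| \leq \lipsapx \,\|\apa(\phi(o)) - \mathbb{E}_{o'}\phi(o')\| \leq \lipsapx \epsilon_P.
\end{equation*}
Combining the two pieces yields $\|\tilde{h}_{k+1}\circ\phi - \bellman^* Q_k\|_\infty \leq \epsilon_R + \gamma\lipsapx\epsilon_P$, and since $\apx\bellman^* Q_k$ is at least as good as this candidate, the same bound holds for $\|\apx\bellman^* Q_k - \bellman^* Q_k\|_\infty$. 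Plugging into Lemma~\ref{lem:avi} finishes the proof.

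The main obstacle I anticipate is the step that swaps the expectation over $o'\sim P(\cdot|o,a)$ with an application of the Lipschitz function $g$ on $\mathbb{E}_{o'}\phi(o')$. For stochastic transitions there is in general a Jensen gap $|\mathbb{E}_{o'}g(\phi(o')) - g(\mathbb{E}_{o'}\phi(o'))|$ that is not controlled by $\epsilon_P$ alone. Theorem~\ref{thm:model_learn_error} sidesteps this by focusing on deterministic transitions (inheriting this from the DeepMDP analysis), and I would do the same here, stating the theorem as sharp for deterministic $P$ and remarking that a fully stochastic extension requires strengthening the model-error definition to a Wasserstein-type distance so that Kantorovich duality can be applied to the $\lipsapx$-Lipschitz $g$.
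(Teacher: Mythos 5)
Your proposal is correct and follows essentially the same route as the paper's proof in Appendix~D: construct the candidate $\hat{\apf}_{k+1}(z,a)=\ara(z)+\gamma\max_{a'}\apf_k(\apa(z),a')$, invoke the optimality of $\apx$ to transfer its error bound to $\apx\bellman^*\hat{Q}_k$, split the error into a reward term bounded by $\epsilon_R$ and a transition term bounded via the Lipschitz property of $\apf_k$, and plug the resulting $\epsilon_R+\gamma\lipsapx\epsilon_P$ into Lemma~\ref{lem:avi}. The Jensen-gap caveat you flag is genuine and applies equally to the paper's own argument, which pushes the expectation inside the absolute value and then bounds $\mathbb{E}_{o'}\|\apa(\phi(o))-\phi(o')\|$ by $\epsilon_P$ even though the hypothesis only controls $\|\mathbb{E}_{o'}\phi(o')-\apa(\phi(o))\|$ --- so your explicit restriction to deterministic transitions (or a Wasserstein-type model error) is, if anything, the more careful statement.
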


\begin{proof}[Proof of Theorem~\ref{thm:error_avi}]
Let $\apf_k$ be the approximation function in the $k$-th iteration. That is, the approximated Q function in the $k$-th iteration is $\hat{Q}_k=\apf_k\circ\phi$. As the rewards of all state-action pairs are non-negative, we have $\apf_k(\phi(o),a)\geq 0$.

Define a function $\hat{\apf}_{k+1}$ as
\begin{equation}
    \hat{\apf}_{k+1}(\phi(o),a) = \ara(\phi(o)) + \gamma \max_{a^\prime\in\actions} \apf_k(\apa(\phi(o)), a^\prime)
\end{equation}

Given that 
\begin{align}
    \bellman^*\hat{Q}_k(o,a) &= R(o,a) + \gamma \mathbb{E}_{o^\prime\sim P(\cdot|o,a)}[\max_{a^\prime\in\actions} \hat{Q}_k(o^\prime,a^\prime)] \\
    &= R(o,a) + \gamma \mathbb{E}_{o^\prime\sim P(\cdot|o,a)}[\max_{a^\prime\in\actions} \apf_k(\phi(o^\prime), a^\prime)],
\end{align}
we have that for any $o\in\states$, $a\in\actions$,
\setlength\abovedisplayskip{3pt}
\setlength\belowdisplayskip{3pt}
\begin{align}
    &\left|\hat{\apf}_{k+1}(\phi(o),a) - \bellman^*\hat{Q}_k(o,a)\right| \\
    =& \left| \ara(\phi(o)) + \gamma \max_{a^\prime\in\actions} \apf_k(\apa(\phi(o)), a^\prime) - (R(o,a) + \gamma \mathbb{E}_{o^\prime\sim P(\cdot|o,a)}[\max_{a^\prime\in\actions} \apf_k(\phi(o^\prime), a^\prime)]) \right| \\
    \leq& \left| \ara(\phi(o)) - R(o,a)\right| + \gamma \left| \max_{a^\prime\in\actions} \apf_k(\apa(\phi(o)), a^\prime) - \mathbb{E}_{o^\prime\sim P(\cdot|o,a)}[\max_{a^\prime\in\actions} \apf_k(\phi(o^\prime), a^\prime)]  \right|\\
    \leq& \epsilon_R + \gamma \mathbb{E}_{o^\prime\sim P(\cdot|o,a)}[ \left| \max_{a^\prime\in\actions} \apf_k(\apa(\phi(o)), a^\prime) - \max_{a^\prime\in\actions} \apf_k(\phi(o^\prime), a^\prime) \right| ]\\
    \leq& \epsilon_R + \gamma \mathbb{E}_{o^\prime\sim P(\cdot|o,a)} \max_{a^\prime\in\actions} \left| \apf_k(\apa(\phi(o)), a^\prime) - \apf_k(\phi(o^\prime), a^\prime) \right| \label{eq:maxa}\\
    \leq& \epsilon_R + \gamma \mathbb{E}_{o^\prime\sim P(\cdot|o,a)} \max_{a^\prime\in\actions} \lipsapx \left\| \apa(\phi(o) - \phi(o^\prime) \right\| \label{eq:lips}\\
    \leq& \epsilon_R + \gamma \epsilon_P \lipsapx,
\end{align}
where (\ref{eq:maxa}) is due to the non-negativity of $h_k$, and (\ref{eq:lips}) is due to Assumption~\ref{assump:lips_appro}.

Now we have shown that the constructed $\hat{h}_{k+1}$ satisfies 
\begin{equation}
    \|\hat{\apf}_{k+1}\circ \phi - \bellman^* \hat{Q}_k \|_\infty \leq \epsilon_R + \gamma \epsilon_P \lipsapx.
\end{equation}

According to the definition of $\apx$, we obtain
\begin{equation}
\label{eq:apx_error}
    \|\apf_{k+1}\circ \phi - \bellman^* \hat{Q}_k \|_\infty \leq \|\hat{\apf}_{k+1}\circ \phi - \bellman^* \hat{Q}_k \|_\infty \leq \epsilon_R + \gamma \epsilon_P \lipsapx
\end{equation}
since $\apx$ finds a $\apf_{k+1}$ that minimizes the approximation error.

Therefore, Theorem~\ref{thm:error_avi} follows by combining Inequality~(\ref{eq:apx_error}) and Lemma~\ref{lem:avi}.

\end{proof}
\section{Experiment Details and Additional Results}
\label{app:exp}

\subsection{Experiment Setting Details}
\label{app:exp_setting}

\subsubsection{Baselines}
\label{app:exp_baseline}
\begin{itemize}
    \item \textbf{Single}: A DQN or SAC learner on the target domain without any auxiliary tasks.
    \item \textbf{Auxiliary}: On the target domain, the encoder $\phi\target$ is optimized based on the loss $L_{\text{base}}(\phi\target, \pi\target) + \lambda \Big[L_P(\phi\target;\hat{P}\target)+ L_R(\phi\target;\hat{R}\target)\Big]$. Compared with our transfer algorithms which transfer the learned dynamics from source domain to the target domain, it learns the dynamics model $(\hat{P}\target, \hat{R}\target)$ on the target domain from scratch. Here we set $\lambda$ to be the same as our transferred algorithm (values of $\lambda$ are provided in Appendix~\ref{app:exp_drl}). The purpose of this baseline is to test whether the efficiency of our proposed transfer algorithms come from the transferred latent dynamics or from the auxiliary loss (or potentially both). 
    \item \textbf{Fine-tune}: To test whether our transfer algorithms benefit from loading the learned policy head $\pi\source$, on the target domain, we load the weights of $\pi\target$ from the trained source policy head $\pi\source$ and train the DQN or SAC agent without any auxiliary loss.
    \item \textbf{Time-aligned}:
    \citet{gupta2017learning} propose to learn aligned representations for two tasks, under the assumption that the source-task agent and the target-task agent reach similar latent sates at the same time step, i.e. $\phi\target(s\target_t)=\phi\source(s\source_t)$. Note that this assumption is valid when the initial state is fixed and the transitions are all deterministic. 
    Although in our setting, the agent can not learn both tasks simultaneously, we can adapt the idea of time-based alignment and encourage the target encoder to map target observations to the source representations happening at the same time step.\\
    In our experiments, we store $N$ source trajectories $\Big\{s^{i}_0,a^{i}_0,s^{i}_{1}, a^{i}_{1},...,\Big\}_{i=1}^{N}$ collected during source task training. Then on the target domain, we first collect $N$ trajectories following the same action as the one collected from the source domain. In other words, at time step $t$ of the $i$-th trajectory, we take action $a_{t}^i$. 
    After the target trajectories are collected, we minimize the alignment loss $L_{\text{align}}(\phi\target)=\mathbb{E}\Big[\big(\phi\target(s\target_t)-\phi\source(s\source_t)\big)^2\Big]$ to enforce that observations from source and target domain at the same time-step have the same representations.\newline
    In our experiments, we set $N$ to be 10\% of the training trajectories. (We also experimented with larger $N$', for example using all the training trajectories, but the differences are minor.) In terms the alignment loss, we optimize the loss for 1000 epochs with batch size equal to 256, where at each epoch we sample a batch of paired source and target observations and compute the alignment loss. After pre-training the target encoder, we load the weight into $\phi\target$ and resumes the normal DQN or SAC training. \\
    Our experimental results show that, although more training steps are given to the time-aligned learner, it does not outperform the single-task learner, and sometimes fails to learn (e.g. in 3DBall). The main reason is that the time-based assumption does not hold in practice as initial states are usually randomly generated. 
    Therefore, even though the agent exactly imitates the source-task policy at every step, 
    the observations from source and target task do not necessarily match at every time-step. 
    In environments with non-deterministic transitions, the state mismatch will be a more severe issue and may lead to an unreasonable encoder.
\end{itemize}

\subsubsection{Environments}
\label{app:exp_env}

\textbf{Environment Settings in Vec-to-pixel Tasks}\quad
\begin{itemize}
    \item CartPole: The source task is the same as the ordinary CartPole environment on Gym. For the pixel-input target task, we extract the screen of the environment which is of size (400,600), and crop the pixel input to let the image be centered at the cart. The resulting observation has size (40,90) after cropping. We take the difference between two consecutive frames as the agent's observation.
    \item Acrobot: The source task is the same as the ordinary Acrobot environment on Gym. For the pixel-input target task, we first extract the screen of the environment which is of size (150,150), and then down-sample the image to (40,40). We also take the difference between two consecutive frames as the agent's observation.\newline
    \item Cheetah-Run: The source task is the Cheetah Run Task provided by DeepMind Control Suite (DMC)~\citep{tassa2018deepmind}. For the target task, we use the image of size (84,84) rendered from the environment as the agent's observation.
\end{itemize}
\textbf{Environment Settings in More-sensor Tasks}\quad
For the target task of MuJoCo environments, we add the center of the mass based inertia and velocity into the observations of the agent, concatenating them with the original observation on the source task. Consequently, in the target environments, the dimensionality of the observation space on target task become much larger than that of the source task. On Hopper, the dimensionality of the target observation is 91, whereas the the source observation space only has 11 dimensions. The dimensionalities of target tasks on HalfCheetah, Hopper and Walker are 145, 91, 145 respectively.

\textbf{Environment Settings in Broken-sensor Tasks}\quad
3DBall is an example environment provided by the ML-Agents Toolkit~\citep{juliani2018unity}. In this task, the agent (a cube) is supposed to balance a ball on its top. At every step, the agent will be rewarded if the ball is still on its top. If the ball falls off, the episode will immediately end.
The highest episodic return in this task is 100. 
There are two versions of this game, which only differ by their observation spaces.
The simpler version (named 3DBall in the toolkit) has 8 observation features corresponding to the rotation of the agent cube, and the position and velocity of the ball.
The harder version (named 3DBallHard in the toolkit) does not have access to the ball velocity, but observes a stack of 9 past frames, each of which corresponds to the rotation of the agent cube, and the position of the ball, resulting in 45 observation dimensions at every step.
We regard 3DBall as the source task and 3DBallHard as the target task in our experiments.

\subsubsection{Implementation of Base DRL Algorithms and Hyper-parameter Settings}
\label{app:exp_drl}

\textbf{Implementation of DQN}\quad
To ensure that the base learning algorithm learns the pixel-input target tasks well, we follow the existing online codebases for pre-processing, architectures and hyperparameter settings in pixel CartPole\footnote{\href{https://pytorch.org/tutorials/intermediate/reinforcement_q_learning.html}{https://pytorch.org/tutorials/intermediate/reinforcement\_q\_learning.html}} and pixel Acrobot
\footnote{\href{https://github.com/eyalbd2/Deep_RL_Course }{https://github.com\/eyalbd2\/Deep\_RL\_Course}}. On source domain, the DQN network has a 2-layer encoder and a 2-layer Q head of hidden size 64, and the representation dimension is set as 16. For pixel-input, the encoder has three convolution layers followed by a linear layer. The number of channels of the convolutional layers are equal to 16, 32, 32, respectively (kernel size=5 for all three layers). We use the Adam optimizer with learning rate $0.001$ and $\beta_1, \beta_2=0.9,0.999$. The target Q network is updated every 10 iterations.
In CartPole, we use a replay buffer with size 10000. In the more challenging Acrobot, we use a prioritized replay buffer with size 100000.

\textbf{Implementation of SAC}\quad
For MuJoCo environments, we follow an elegant open-sourced SAC implementation\footnote{\href{https://github.com/pranz24/pytorch-soft-actor-critic}{https://github.com/pranz24/pytorch-soft-actor-critic}}.
The number of hidden units for all neural networks is 256.
The actor has a two-layer encoder and a two-layer policy head.
The two Q networks both have three linear layers.
The activation function is ReLU and the learning rate is $3\cdot 10^{-4}$.
We train the dynamics model and the reward model every 50k interactive steps in the source task.
For the DMC environment Cheetah-Run, we follow the open-sourced SAC implementation with an autoencoder \footnote{\href{https://github.com/denisyarats/pytorch_sac_ae}{https://github.com/denisyarats/pytorch\_sac\_ae}}.
The pixel encoder has three convolution layers and one linear layer. The number of channels for all convolutional layers is 32 and the kernel size is 3.
For the 3DBall environment, as it can only be learned within the ML-Agents toolkit, we directly use the SAC implementation provided by the toolkit with the default hyperparameter settings.

\textbf{Implementation of Latent Dynamics Model}\quad
Note that our goal is to learn a good representation by enforcing it predicting the latent dynamics, different from model-based RL~\citep{hafner2019dream} that aims to learn accurate models for planning. Therefore, we let the dynamics models $\hat{P}$ and $\hat{R}$ be simple linear networks, so that the representation can be more informative in terms of representing dynamics and learning values/policies. For environments with discrete action spaces, we learn $|\actions|$ linear transition networks and $|\actions|$ linear reward models. For environments with continuous action spaces, we first learn an action encoder $\psi: \actions \to \mathbb{R}^d$ with the same encoding size $d$ as the state representation. Then, we learn a linear transition network and a linear reward network with $\hat{P}(\phi(o)\circ\psi(a))$ being the predicted next representation, and $\hat{R}(\phi(o)\circ\psi(a))$ being the predicted reward, where $\circ$ denotes element-wise product. In practice, we find this implementation achieves good performance across many environments. \\
In addition, note that due to the significant difference between source observation and target observation, the initial encoding scale could be very different in source and target tasks, making it hard for them to be regularized by the same dynamics model. Therefore, we normalize the output of both encoders to be a unit vector (l2 norm is 1), which remedies the potential mismatch in their scales.

\textbf{Hyperparameter Settings for Transfer Learning}\quad
In experiments, we find that it is better to set $\lambda$ relatively large when the environment dynamics are simple and the dynamics model is of high quality. When the environment dynamics is complex, we choose to be more conservative and set $\lambda$ to be smaller. Concretely, in CartPole, $\lambda$ is set as 18; in 3DBall, $\lambda$ is set as 10; in Acrobot, $\lambda$ is set as 5; in the remaining MuJoCo environments where dynamics are more complicated, $\lambda$ is set as 1. Although we use different $\lambda$'s in different environments based on domain knowledge, we find that different values of $\lambda$'s do not have much influence on the learning performance. Figure~\ref{fig:hyper} provided in Appendix~\ref{app:exp_results} shows a test on the hyper-parameter $\lambda$, where we can see that our algorithm effectively transfers knowledge under various values of $\lambda$. \\
Regarding the representation dimension, we set it to be smaller for simpler tasks, and larger for more complex tasks. In 3DBall, we set the encoding size to be 8; in CartPole, we set the encoding size as 16; in Acrobot, we set the encoding size as 32; in Cheetah-Run, we set the encoding size as 50; in MuJoCo tasks, we set the encoding size as 256. Again, we find that the feature size does not influence the performance too much. But based on the theoretical insights of learning minimal sufficient representation~\cite{achille2018emergence}, we believe that it is generally better to have a lower-dimensional representation while making sure it is sufficient for learning.

\newpage 
\subsection{Additional Experimental Results}
\label{app:exp_results}

\textbf{Ablation Study: Transferring Different Components}\\
Figure~\ref{fig:mujoco_ablation} shows the ablation study of our method in continuous control tasks. We compare our method with the following variants:\\
(1) learning auxiliary tasks without transfer, \\
(2) only transferring transition models $\hat{P}$ and \\
(3) only transferring reward models $\hat{R}$.

Compared with the single-task learning baseline (the blue curves), we find that all the variants of our method can make some improvements, which suggests that learning dynamics models as auxiliary tasks, transferring $\hat{P}$ and $\hat{R}$ are all effective designs for accelerating the target task learning. Finally, our method (the red curves) that combines the above components achieves the best performance, justifying the effectiveness of our transfer algorithm.

\begin{figure}[!htbp]
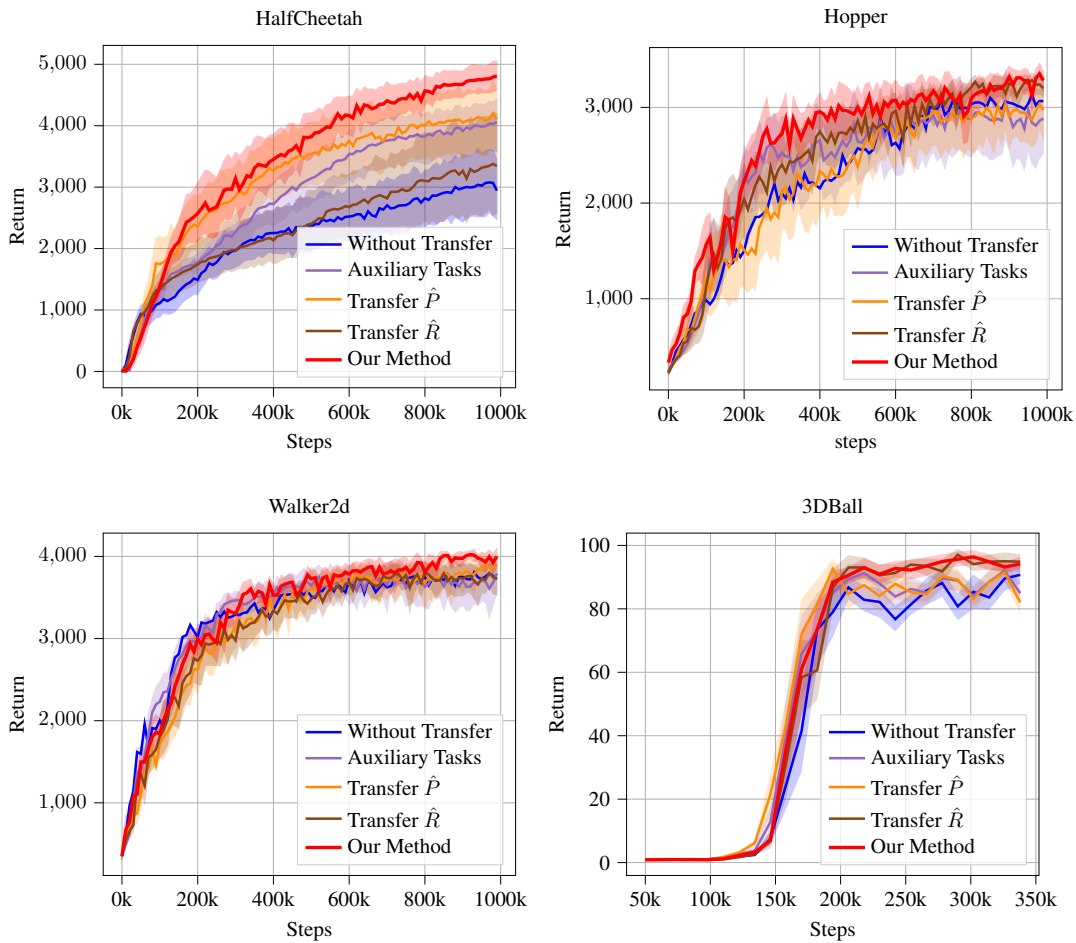

\centering
 \begin{subfigure}[t]{0.48\columnwidth}
  \centering
  \input{figs/halfcheetah_ablation}
 \end{subfigure}
 \hfill
 \hspace{-1em}
 \begin{subfigure}[t]{0.48\columnwidth}
  \centering
  \input{figs/hopper_ablation} 
 \end{subfigure}

 \begin{subfigure}[t]{0.48\columnwidth}
  \centering
  \input{figs/walker2d_ablation} 
 \end{subfigure}
 \hfill
 \begin{subfigure}[t]{0.48\columnwidth}
  \centering
  \input{figs/ball_ablation_all} 
 \end{subfigure}
 \vspace{-2em}
 \caption{\small{Ablation study of our method on different transferred components.}}
\label{fig:mujoco_ablation}
\end{figure}

\newpage 
\begin{wrapfigure}{r}{0.5\textwidth}
    \centering
    \input{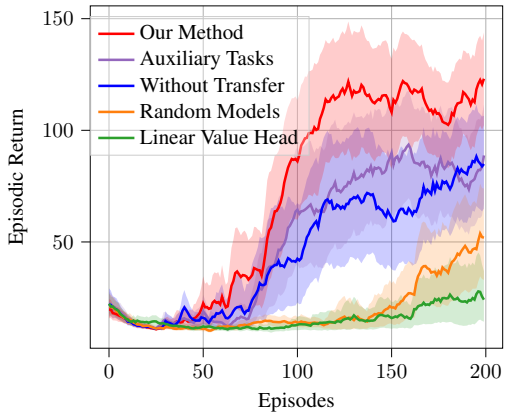}
    \vspace{-1em}
    \caption{In the Vec-to-pixel CartPole environment, sanity check verifies the effectiveness of our algorithm design. Results are averaged over 20 random seeds.}
    \label{fig:cart_ablation}
    \vspace{-1em}
\end{wrapfigure}
\textbf{Sanity Check: Effectiveness of the Proposed Transfer Method}\\
We conduct another ablation study to evaluate each component of our algorithm in the CartPole environment as shown in Figure~\ref{fig:cart_ablation}. We find that when transferring the dynamics models with only a linear value head (the \textcolor{linearcolor}{\textbf{green}} curve), the agent fails to learn a good policy as we analyzed in Section~\ref{sec:theory}. If the dynamics models $(\hat{P},\hat{R})$ are randomly generated instead of being transferred from the source task (the \textcolor{randomcolor}{\textbf{orange}} curve), the agent does not learn, either. More importantly, if we learn dynamics models as auxiliary tasks in the target task without transferring them from the source (the \textcolor{auxcolor}{\textbf{purple}} curve), the agent learns a little better than a vanilla agent, but is worse than our proposed transfer algorithm. 
These empirical results have verified our theoretical insights and shown the effectiveness of our algorithm design.

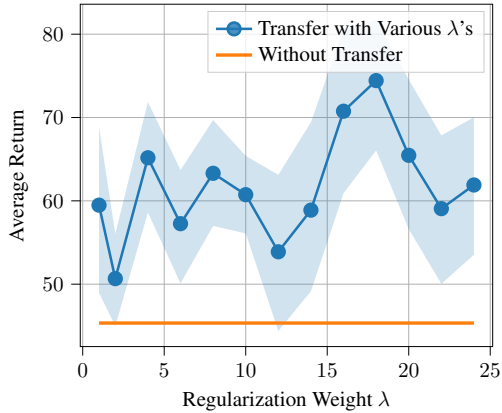
\begin{wrapfigure}{r}{0.5\textwidth}
    \centering
\begin{tikzpicture}[scale=0.8]

\definecolor{color0}{rgb}{0.12156862745098,0.466666666666667,0.705882352941177}
\definecolor{color1}{rgb}{1,0.498039215686275,0.0549019607843137}

\begin{axis}[
legend cell align={left},
legend style={fill opacity=0.8, draw opacity=1, text opacity=1, draw=white!80!black},
tick align=outside,
tick pos=left,
x grid style={white!69.0196078431373!black},
xmajorgrids,
xmin=-0.15, xmax=25.15,
xtick style={color=black},
xlabel={Regularization Weight $\lambda$},
y grid style={white!69.0196078431373!black},
ymajorgrids,
ymin=42.51114025, ymax=83.5967214166667,
ytick style={color=black},
ylabel={Average Return},
]
\path [fill=color0, fill opacity=0.2]
(axis cs:1,68.8835166666667)
--(axis cs:1,48.96444)
--(axis cs:2,44.9647116666667)
--(axis cs:4,58.59723)
--(axis cs:6,50.0959033333333)
--(axis cs:8,56.99296)
--(axis cs:10,56.087765)
--(axis cs:12,44.3786666666667)
--(axis cs:14,49.1427466666667)
--(axis cs:16,60.9039216666667)
--(axis cs:18,66.04868)
--(axis cs:20,56.6198883333333)
--(axis cs:22,50.0327633333333)
--(axis cs:24,53.5275133333333)
--(axis cs:24,70.0231633333333)
--(axis cs:24,70.0231633333333)
--(axis cs:22,67.85762)
--(axis cs:20,74.5615133333333)
--(axis cs:18,81.729195)
--(axis cs:16,80.5915783333333)
--(axis cs:14,69.4391683333333)
--(axis cs:12,63.1279883333333)
--(axis cs:10,65.3951133333333)
--(axis cs:8,69.693275)
--(axis cs:6,63.69618)
--(axis cs:4,71.870995)
--(axis cs:2,56.038055)
--(axis cs:1,68.8835166666667)
--cycle;

\addplot [very thick, color0, mark=*, mark size=3, mark options={solid}]
table {%
1 59.481312
2 50.6649345
4 65.1702346666667
6 57.2592315
8 63.306004
10 60.7404753333333
12 53.886415
14 58.8863686666667
16 70.7575176666667
18 74.4304055
20 65.4625285
22 59.0641981666667
24 61.9044141666667
};
\addlegendentry{Transfer with Various $\lambda$'s}
\addplot [ultra thick, color1]
table {%
1 45.334
2 45.334
4 45.334
6 45.334
8 45.334
10 45.334
12 45.334
14 45.334
16 45.334
18 45.334
20 45.334
22 45.334
24 45.334
};
\addlegendentry{Without Transfer}
\end{axis}

\end{tikzpicture}
    \vspace{-1em}
    \caption{In the Vec-to-pixel CartPole environment, under different selections of hyperparameter $\lambda$, the algorithm works better than learning from scratch (when $\lambda=0$). Results are averaged over 20 random seeds.}
    \label{fig:hyper}
    \vspace{-1em}
\end{wrapfigure}
\textbf{Hyper-parameter Test}\\
Figure~\ref{fig:hyper} further visualizes how the hyperparameter $\lambda$ (regularization weight) influences the transfer performance in the Vec-to-pixel CartPole environment. It can be found that the agent generally benefits from a larger $\lambda$, which suggests that the model-based regularization has a positive impact on the learning performance. For a wide range of $\lambda$'s, the agent always outperforms the learner without transfer (the learner with $\lambda=0$). Therefore, our algorithm is not sensitive to the hyperparameter $\lambda$, and a larger $\lambda$ is preferred to get better performance.
In Appendix~\ref{app:exp_drl}, we have provided the $\lambda$ selections for all experiments.

\end{document}